\theoremstyle{plain}
\newtheorem{theorem}{Theorem}
\theoremstyle{definition}
\newtheorem{definition}[theorem]{Definition}
\theoremstyle{remark}
\begin{document}

\title{Be Bayesian by Attachments to Catch More Uncertainty}

\author{Shiyu Shen, Bin Pan, Tianyang Shi, Tao Li and Zhenwei Shi

    \thanks{This work was supported in part by the National Key Research and Development Program of China under Grant 2022YFA1003800 and Grant 2022ZD0160401; in part by the National Natural Science Foundation of China under Grant 62001251, Grant 62125102, and Grant 62272248; and in part by the Beijing-Tianjin-Hebei Basic Research Cooperation Project under Grant F2021203109. \emph{(Corresponding author: Bin Pan)}}

    \thanks{Shiyu Shen and Bin Pan (corresponding author) are with the School of Statistics and Data Science, KLMDASR, LEBPS, and LPMC, Nankai University, Tianjin 300071, China
        (e-mail: shenshiyu@mail.nankai.edu.cn; panbin@nankai.edu.cn).}
    \thanks{Tianyang Shi and Zhenwei Shi are with the Image Processing Center,
        School of Astronautics, Beihang University, Beijing 100191, China (e-mail: shitianyang@buaa.edu.cn; shizhenwei@buaa.edu.cn).}
    \thanks{Tao Li is with the College of Computer Science, Nankai University, Tianjin 300071, China (e-mail: litao@nankai.edu.cn).}
}

\markboth{IEEE Transactions on Neural Networks and Learning Systems}%
{Shell \MakeLowercase{\textit{et al.}}: A Sample Article Using IEEEtran.cls for IEEE Journals}

\maketitle

\begin{abstract}
    Bayesian Neural Networks (BNNs) have become one of the promising approaches for uncertainty estimation due to the solid theorical foundations. However, the performance of BNNs is affected by the ability of catching uncertainty. Instead of only seeking the distribution of neural network weights by in-distribution (ID) data, in this paper, we propose a new Bayesian Neural Network with an Attached structure (ABNN) to catch more uncertainty from out-of-distribution (OOD) data. We first construct a mathematical description for the uncertainty of OOD data according to the prior distribution, and then develop an attached Bayesian structure to integrate the uncertainty of OOD data into the backbone network. ABNN is composed of an expectation module and several distribution modules. The expectation module is a backbone deep network which focuses on the original task, and the distribution modules are mini Bayesian structures which serve as attachments of the backbone. In particular, the distribution modules aim at extracting the uncertainty from both ID and OOD data. We further provide theoretical analysis for the convergence of ABNN, and experimentally validate its superiority by comparing with some state-of-the-art uncertainty estimation methods Code will be public.
\end{abstract}

\begin{IEEEkeywords}
    Uncertainty Estimation, Bayesian Neural Networks, Out-of-Distribution
\end{IEEEkeywords}

\section{Introduction}

Deep Neural Networks (DNNs) have gained widespread recognition as highly effective predictive models \cite{li2022research,zhang2022meta,lauriola2022introduction}. However, only remarkable predictive performance may not meet all the requirements of real-world applications. In some safety-critical scenarios, uncertainty estimation poses a significant challenge. \cite{gawlikowski2023survey,wang2022uncertainty,park2022uncertainty,oszkinat2022uncertainty}. Recent studies have raised concerns about the reliability of DNNs, as they tend to make overconfident predictions \cite{hein2019relu,wei2022mitigating}. Furthermore, when a DNN encounters out-of-distribution (OOD) samples that deviate significantly from its training data, it might generate overly confident yet meaningless predictions, resulting in potential issues. \cite{nguyen2015deep,wei2022mitigating}. Therefore, uncertainty estimation remains a challenge for DNNs.

Recently, Bayesian Neural Networks (BNNs) have shown promising performance for uncertainty estimation \cite{BBB, huang2023bayesian}. In BNNs, model parameters are treated as random variables with prior distributions, and their posterior distributions given training data are learned. During testing, BNNs generate predictions in the form of random variables with specific distributions. There are three major approaches to learn posterior distributions: Markov chain Monte Carlo \cite{li2016preconditioned,jia2023energy}, Laplacian approximation \cite{NEURIPS2021_a7c95857,gaedke2023parallelized} and variational inference \cite{BBB,nazaret2022variational}. Variational inference, i.e. approximating the true posterior with some simple distributions, is a popular approach \cite{graves2011practical}. For example, Blundell et~al. \cite{BBB} proposed a backpropagation-compatible algorithm for variational BNN training. Shridhar et~al. \cite{shridhar2019comprehensive} introduced variational Bayesian inference into Convolution Neural Networks. Kristiadi et~al. \cite{kristiadi2020being} found it sufficient to build a ReLU network with only a single Bayesian layer.

However, most BNNs only catch uncertainty from in-distribution (ID) training data, potentially constraining their effectiveness \cite{de2023value,kristiadi2022being}. The uncertainty quantification ability of BNNs comes from their posteriors of parameters given data. Since OOD data are often incomplete, the true posteriors given OOD data are unclear, making estimation even more difficult. Therefore, incorporating OOD data into Bayesian inference remains a challenge. Consequently, there are instances where BNNs underperform frequentist methods, especially in OOD detection task \cite{OE,kristiadi2022being,hendrycks2018deep}.

\begin{figure*}
    \centering
    \subcaptionbox{BBP}[0.24\linewidth]{
        \includegraphics[width=1\linewidth]{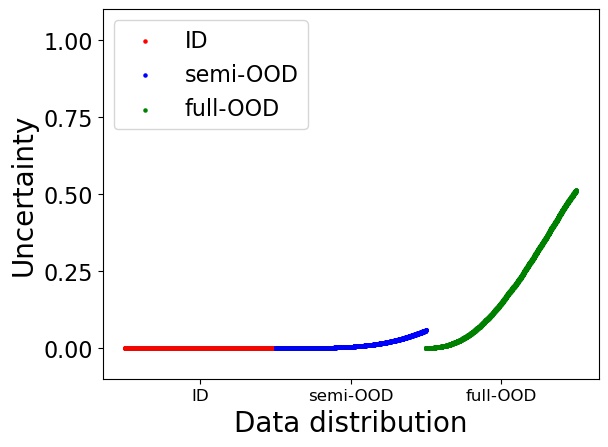}
    }
    \subcaptionbox{SDE-Net}[0.24\linewidth]{
        \includegraphics[width=1\linewidth]{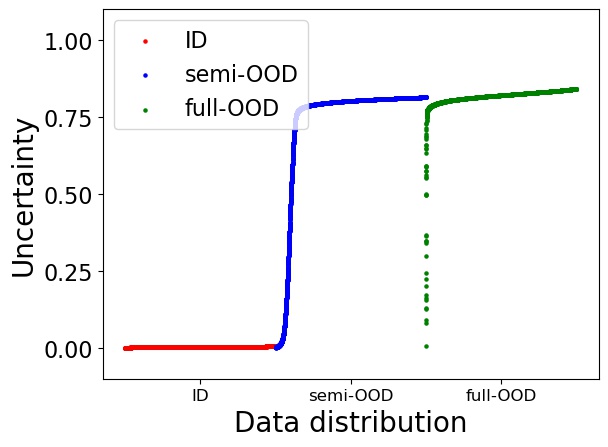}
    }
    \subcaptionbox{ABNN}[0.24\linewidth]{
        \includegraphics[width=1\linewidth]{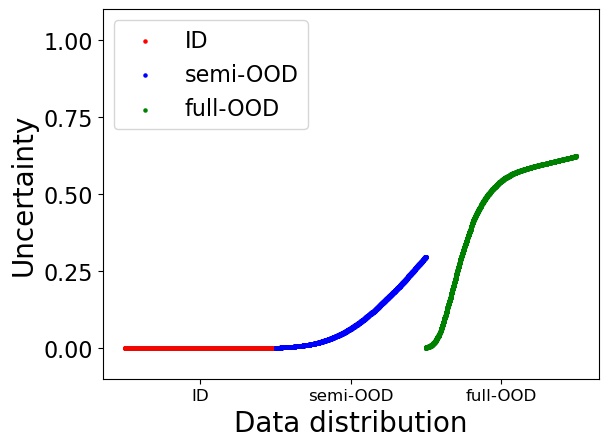}
    }
    \subcaptionbox{Ideal}[0.24\linewidth]{
        \includegraphics[width=1\linewidth]{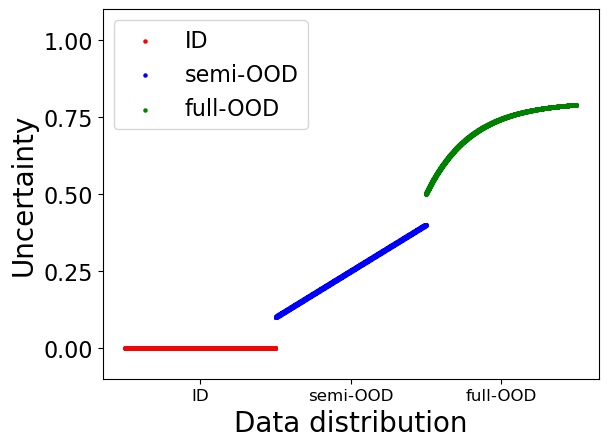}
    }
    \caption{Visualization of uncertainty for BBP \cite{BBB}, SDE-Net \cite{kong2020sde} and ABNN. The horizontal axis is the index of predictions ordered by uncertainty on each dataset, and the longitudinal axis is uncertainty. An ideal estimate should increase gradually as the input becomes more OOD, while still being separable on different datasets. Please refer to \cref{4_2} for more details.}
    \label{constraints}
\end{figure*}

OOD training has been well studied for OOD detection, and researchers find that using auxiliary outlier data can bring significant improvements \cite{hendrycks2018deep,OE}. For example, Outlier Exposure (OE) \cite{hendrycks2018deep} is a simple but effective method that uniformly labels OOD data for classification. Many variants based on OE have been proposed, and they achieved the state-of-the-art performance \cite{OE,zhang2023mixture,zhu2023openmix}. However, OE implicitly assumes uniform uncertainty across all OOD data by labeling them uniformly, which raises concerns. While the uniform uncertainty offers advantages in OOD detection, it may contradict the principle of uncertainty estimation.

Assigning the same uncertainty to all OOD data may result in misjudgments of valuable predictions. For instance, in handwritten digit recognition task, both a printed digit and an animal image would be considered OOD samples. The printed digit may be correctly classified, so its uncertainty should lie between handwritten digits and animal images. In such a scenario, the recognition system should provide a prediction for the printed digit with a notice, while rejecting the animal image and issuing a warning. However, in OOD detection task, both the printed digit and the animal image are recognized as OOD data, and they are all abandoned. Consequently, the modeling that uncertainty increases continuously as inputs become more dissimilar from ID data is more realistic. We show how different methods handle uncertainty in \cref{constraints}. Furthermore, directly adapting OE to BNNs is not appropriate. The posteriors would be questionable considering that the pseudo labels for OOD data differ greatly from the true labels.

In this paper, a new Bayesian Neural Network with an Attachment structure (ABNN) is proposed to catch more uncertainty from out-of-distribution data. First, mathematical descriptions for the uncertainty of OOD data are established based on prior distributions, and OOD data are categorized into semi-OOD and full-OOD subsets. Then,  we investigate the correlation between uncertainty and parameter variance. Additionally, an adversarial strategy is proposed to integrate OOD uncertainty into ID uncertainty. Meanwhile, we develop an attachment structure to mitigate the adverse effect of OOD training on the backbone network. ABNN consists of two modules: an expectation module and several distribution modules. The expectation module focuses on the primary task of the neural network, which is similar to the expectation of a traditional BNN. Distribution modules are mini Bayesian structures aiming to catch the uncertainty from both ID data and OOD data. The structure of distribution modules can be simpler than the expectation module, as researchers indicate that just a few Bayesian layers are enough to catch uncertainty \cite{kristiadi2020being,sharma2023bayesian}. Therefore, we design distribution modules to be mini-sized, which seem like attachments of the expectation module.

To summarize, our contributions are:
\begin{itemize}
    \item We propose a variational inference BNN framework to catch additional uncertainty from OOD data.
    \item We refine the definitions for OOD data, and propose the concept of semi-OOD and full-OOD data.
    \item We propose a new training approach to appropriately catch OOD uncertainty. The convergence is theoretically proved.
    \item We design a general attachment structure to maintain the predictive power of backbone while equipping it with better uncertainty estimation ability.
\end{itemize}

\section{Related Work}

\subsection{Bayesian neural network}
Bayesian neural network aims to estimate the uncertainty of parameters \cite{BBB,kristiadi2020being,jospin2022hands}. The key idea of BNN is to estimate the posterior distributions of parameters given training data. Recently, researches have proposed several realization methods for BNN, including Variational Inference\cite{BBB}, Markov chain Monte Carlo \cite{li2016preconditioned} and Laplace Approximation \cite{NEURIPS2021_a7c95857,liu2023bayesian}. During testing, different DNNs are sampled from the BNN posterior, and each DNN make a prediction. The final prediction of the BNN is determined by aggregating these individual predictions, with uncertainty being represented by the variance.

Variational inference is a popular approach to train BNNs. It approximates the true parameter posteriors using commonly used distributions, such as Gaussian distribution. The distance between variational distribution and the true posterior is quantified by Kullback-Leibler (KL) divergence. Blundell et al. \cite{BBB} propose a backpropagation-compatible algorithm for variational BNN training. Kristiadi et al. \cite{kristiadi2020being} find it sufficient to build a ReLU network with a single Bayesian layer. Krishnan et al. \cite{krishnan2020specifying} propose a method to choose informed weight priors from a DNN.

\subsection{Out-of-distribution detection}

Out-of-distribution detection aims to equip a deep learning model with the ability to detect anomalous distributed test samples from in-distribution samples. Hendrycks and Gimpel \cite{hendrycks2016baseline} revealed that deep learning methods naturally have the potential to detect OOD samples. In addition, a comprehensive OOD detection evaluation metric was proposed \cite{hendrycks2016baseline}. Some methods do not change the initial training procedure, but propose new scores to represent the OOD level \cite{odin,liu2020energy,lin2021mood,djurisic2022extremely}. Some methods add a new class or new branch in the classifier to represent the OOD class, combined with specifically designed training methods, such as leaving-out strategy \cite{vyas2018out}, adversarial training \cite{bitterwolf2020certifiably,choi2019novelty,chen2021atom} and data augmentation \cite{hein2019relu,thulasidasan2019mixup,hendrycks2022pixmix}. 

Outlier exposure \cite{OE} is a simple yet effective method. OE introduces outlier exposure datasets into ID training. The outlier datasets can consist of real-world datasets unrelated to the ID dataset, or pseudo OOD datasets generated from noise. OE does not change the network structure, and the OOD datasets is used by maximizing the cross-entropy loss. Additionally, Kristiadi et al. \cite{kristiadi2022being} developed comprehensive methods for integrating OE into BNN.

\section{Mathematical Description}
\label{2}

In this section, we will present some necessary definitions and properties. FFor clarity and simplicity, we limit our discussion to balanced classification tasks with $K$ classes, using MNIST \cite{lecun1998gradient} as an illustrative example. The definitions and properties can be easily extended to other situations.

\subsection{Segmentation of ID and OOD data}

\begin{figure}[htbp]
    \includegraphics[width=0.95\linewidth]{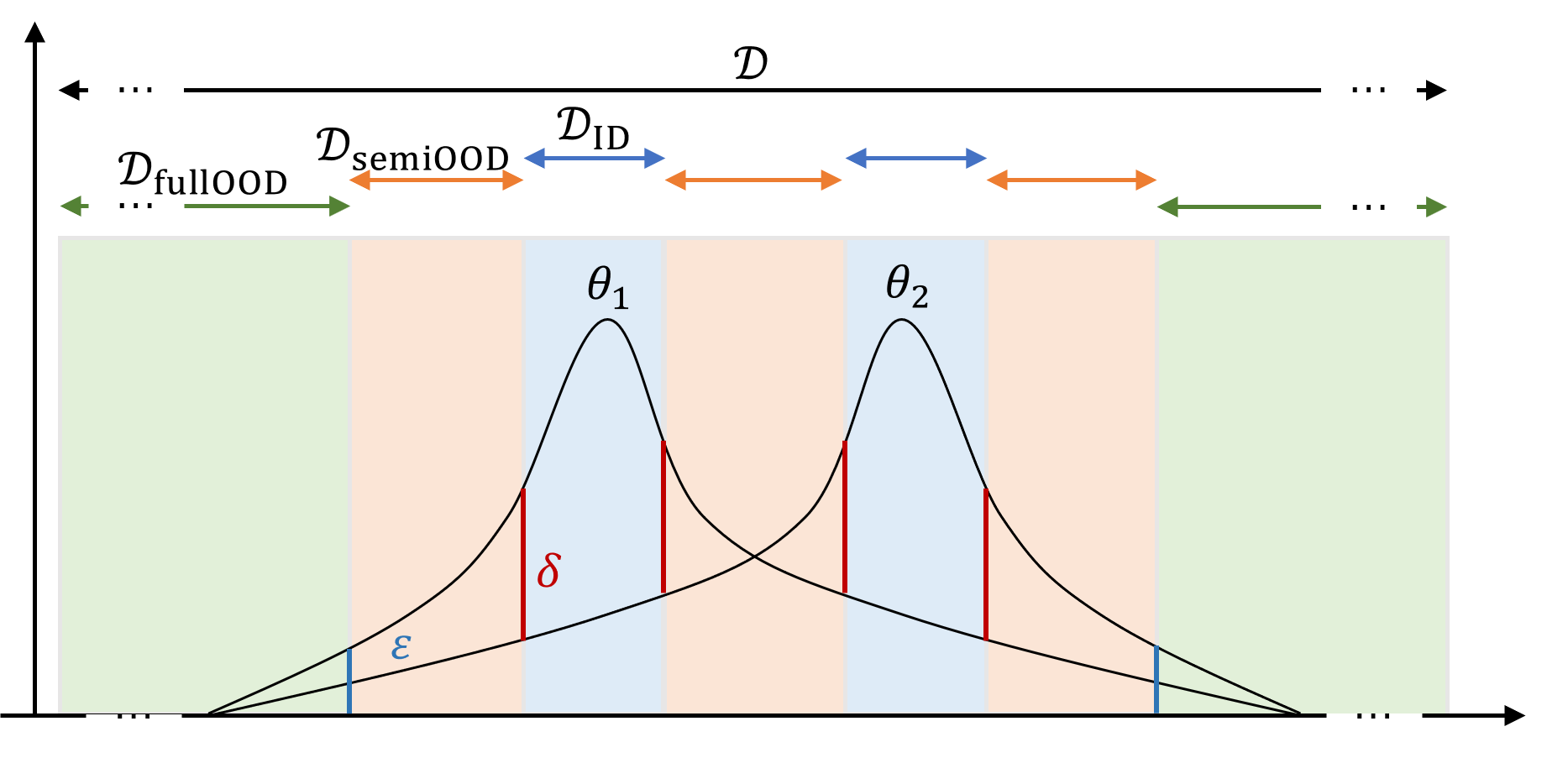}
    \caption{Segmentation of data space. Whether a data point belongs to a specific distribution determines the segmentation bound.\label{distribution_vis}}
\end{figure}

We first segment the whole data space $\mathcal{D}$ into subspaces $\mathcal{D}_{\rm{ID}}$,$\mathcal{D}_{\rm{semiOOD}}$ and $\mathcal{D}_{\rm{fullOOD}}$ to describe the property of different data. This segmentation is based on the assessment of that whether a sample unequivocally belongs to a certain distribution (low uncertainty), potentially belongs to it (moderate uncertainty), or clearly does not belong to it (high uncertainty).  However, it is the nature of the dataset that determines how we describe the data space through definitions, rather than that our definitions dictate data generation. Please refer to \cref{distribution_vis} for better understanding.

\begin{definition}[$\mathcal{D}_{\rm{ID}}$]
    \label{def1}
    Let $X$ denote the input, $X\in \mathcal{D}$. $\{f(X,\theta):\theta\in \Theta\}$ is the family of density functions on $\mathcal{D}$, $\theta$ is the parameter, $\Theta=\{\theta_1,\theta_2,...,\theta_K\}$ denotes the parameters corresponding to data distribution. Given $\delta>0$, $\varepsilon>0$, we call $\mathcal{D}_{\rm{ID}}:=\{X\in \mathcal{D}:\exists \theta_i\in\Theta,f(X,\theta_i)>\varepsilon \} \cap \{X\in \mathcal{D}:\exists\theta_i\in\Theta, \forall \theta_j\neq\theta_i$ and $\theta_j\in\Theta, f(X,\theta_i)>f(X,\theta_j)+\delta\}$ as in-distribution data space.
\end{definition}

For example, in MNIST digits classification, $\mathcal{D}$ is the collection of all possible $28\times 28$ images and $\mathcal{D}_{\rm{ID}}$ is MNIST. $\{f(X,\theta):\theta\in \Theta\}$ is the collection of all the MNIST data distributions. An image that not only exhibits a digit shape ($f(X,\theta_i)>\varepsilon$) but also aligns closely with the characteristic style of MNIST ($f(X,\theta_i)>f(X,\theta_j)+\delta$) should be recognized as a potential sample from MNIST. Here, $\delta$ and $\varepsilon$ represent abstract concepts that describe the degree of "belonging to a distribution." In practice, with only access to the training dataset, the data distribution can be empirically estimated. The segmentation into different subspaces, however, is determined by the inherent characteristics of the training datasets rather than manual selection of $\delta$ and $\varepsilon$. Additionally, an image displaying a digit shape but lacking clear adherence to the MNIST style ($f(X,\theta_i)\approx f(X,\theta_j)$) falls within the category of semi-OOD data, while an image devoid of a digit shape is classified as full-OOD data.

\begin{definition}[$\mathcal{D}_{\rm{semiOOD}}$]
    \label{def2}
    Following \cref{def1}, $\mathcal{D}_{\rm{semiOOD}}:=\{X\in \mathcal{D}:\exists \theta_i\in\Theta,f(X,\theta_i)>\varepsilon \}\setminus \mathcal{D}_{\rm{ID}}$.
\end{definition}
\begin{definition}[$\mathcal{D}_{\rm{fullOOD}}$]
    \label{def3}
    Following \cref{def1}, $\mathcal{D}_{\rm{fullOOD}}:=\{X\in \mathcal{D}:\forall \theta_i\in\Theta,f(X,\theta_i)\leqslant \varepsilon \}$.
\end{definition}
In MNIST digits classification, the collection of all digit images that deviate from the MNIST is $\mathcal{D}_{\rm{semiOOD}}$. These images exhibit similar distributions to ID data but possess distinct styles. A prime example is the SVHN \cite{netzer2011reading}, which serves as a subset of $\mathcal{D}_{\rm{semiOOD}}$. On the other hand, $\mathcal{D}_{\rm{fullOOD}}$ is all the $28\times 28$ images that are not digits.

\subsection{Data Distribution}

Following the segmentation process, it becomes feasible to designate specific distributions to represent ID, semi-OOD, or full-OOD data. Before segmentation, there exists a global prior distribution $p(X)$ for all data, but it is inaccessible. However, after segmentation, we can define conditional distributions on $\mathcal{D}_{\rm{ID}}$ and $\mathcal{D}_{\rm{semiOOD}}$ based on the constraint of $f(X,\theta)$.

\begin{definition}[$p(X|\theta,\mathcal{D}_{\rm{ID}})$]
    \label{def4}
    Following \cref{def1}, $\mathcal{D}_{\rm{ID}}^i := \{X\in \mathcal{D}_{\rm{ID}}: \forall \theta_j\neq\theta_i$ and $\theta_j\in\Theta, f(X,\theta_i)>f(X,\theta_j)+\delta\}$. Then we call
    \begin{align}
        G(f(X,\theta_i)):=\left\{
        \begin{aligned}
             & f(X,\theta_i)+\frac{\int_{\mathcal{D}\setminus \mathcal{D}_{\rm{ID}}^i} f(u,\theta_i) \,du}{\int_{\mathcal{D}_{\rm{ID}}^i}1 \,du},X\in\mathcal{D}_{\rm{ID}}^i \\
             & 0,X\in\mathcal{D}\setminus \mathcal{D}_{\rm{ID}}^i
        \end{aligned}
        \right.
    \end{align}
    the density function of $X$ given the condition that $X \in \mathcal{D}_{\rm{ID}}$ and belongs to $i_{th}$ class,  denoted as $p(X|\theta_i,\mathcal{D}_{\rm{ID}})$.
\end{definition}

We establish this definition based on a common assumption: In classification tasks, each class corresponds to a specific distribution, and the process of classifying a sample is to find the most probable distribution that this sample originates from \cite{wan2018rethinking,Xie_2022_CVPR}. For easy notation, we continue to refer to the prior distributions by $f$. If the class of $X$ is not specified, we can calculate its prior distribution by formula $p(X|\mathcal{D}_{\rm{ID}})=\sum_{i = 1}^{K}p(\theta_i)p(X|\theta_i,\mathcal{D}_{\rm{ID}})$. Meanwhile, we assume that $\theta=\theta_i$ corresponds to that $X$ belongs to the $\rm{i_{th}}$ class.

On $\mathcal{D}_{\rm{semiOOD}}$, we can define prior distributions similarly as \cref{def4}, where the domain of definition for each class is $\mathcal{D}_{\rm{semiOOD}}^i := \{X\in \mathcal{D}_{\rm{semiOOD}}: \forall \theta_j\neq\theta_i$ and $\theta_j\in\Theta, f(X,\theta_i)>f(X,\theta_j)\}$. In addition, there may be a special situation when the class is not clear $\{X\in \mathcal{D}_{\rm{semiOOD}}: \exists \theta_i\in\Theta$ and $\theta_j\in\Theta, f(X,\theta_i)=f(X,\theta_j)=\max_{\theta}f(X,\theta)\}$, and the density function on this domain is the truncation of $\max_{\theta}f(X,\theta)$.

On $\mathcal{D}_{\rm{fullOOD}}$, it is meaningless to consider the relationship between OOD samples and the distributions $f(X,\theta)$, because all the density values are too low to distinguish. In other words, $X$ is independent of $\theta$. We assume that on $\mathcal{D}_{\rm{fullOOD}}$, $X$ has the same distribution as their prior distribution, even though the precise formula for this distribution is unknown.
\begin{definition}[$p(X|\theta,\mathcal{D}_{\rm{fullOOD}})$]
    \label{def5}
    $p(X|\theta_i,\mathcal{D}_{\rm{fullOOD}}):=p(X|\mathcal{D}_{\rm{fullOOD}})$ is called the prior distribution of X given the condition that $X\in \mathcal{D}_{\rm{fullOOD}}$ and belongs to $i_{th}$ class. $p(X|\mathcal{D}_{\rm{fullOOD}})=\frac{P(\mathcal{D}_{\rm{fullOOD}}|X)p(X)}{P(\mathcal{D}_{\rm{fullOOD}})}=\mathbb{I}_{X\in\mathcal{D}_{\rm{fullOOD}}}\frac{m(\mathcal{D})}{m(\mathcal{D}_{\rm{fullOOD}})}p(X)$, $\mathbb{I}_{(\cdot)}$ is the indicator function, $m(\cdot)$ is a measure.
\end{definition}

\subsection{Relationship Between Variance and Uncertainty}

With the above descriptions, we can validate two of our core hypotheses: (1) OOD Labels have higher variances compared to ID labels, and (2) higher variances of BNN layers correspond to higher uncertainty. We only discuss these hypotheses for networks with softmax classification layers.

We begin by demonstrating that although the exact distributions of OOD labels are unknown, if we simplify them as one-hot encoded, their variances are higher than those of ID labels. A one-hot label can be regarded as the Maximum A Posteriori estimate of a categorical distribution \cite{wang2019classification,le2023uncertainty}. To check the variance of a label $y\in\{[1,0,...,0],[0,1,...,0],...,[0,0,...,1]\}$, we turn to each dimension of $y$ and take $y[i]$ as Bernoulli distributed.

\begin{theorem}
    \label{th1}
    $var(y[i]|X,\mathcal{D}_{\rm{ID}})<var(y[i]|X,\mathcal{D}_{\rm{fullOOD}})$
\end{theorem}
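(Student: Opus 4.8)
The plan is to reduce both sides to the variance of a Bernoulli variable and then compare the two success probabilities. Since $y[i]$ is taken to be Bernoulli distributed, writing $p_i^{\rm{ID}} := P(y[i]=1\mid X,\mathcal{D}_{\rm{ID}})$ and $p_i^{\rm{O}} := P(y[i]=1\mid X,\mathcal{D}_{\rm{fullOOD}})$, the identity $\mathrm{var}(y[i]) = p_i(1-p_i)$ turns the claim into a comparison of $g(p):=p(1-p)$ evaluated at the two probabilities. I would first record the elementary fact that drives the whole argument: $g$ is a downward parabola, symmetric about $p=1/2$, so $g(p)<g(1/K)$ precisely when $p\notin[1/K,\,1-1/K]$.

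First I would compute the full-OOD probability. By \cref{def5}, on $\mathcal{D}_{\rm{fullOOD}}$ the input $X$ is independent of $\theta$, so Bayes' rule collapses to $P(\theta_i\mid X,\mathcal{D}_{\rm{fullOOD}})=P(\theta_i)$; invoking the balanced $K$-class assumption gives $P(\theta_i)=1/K$, hence $p_i^{\rm{O}}=1/K$ for every coordinate and $\mathrm{var}(y[i]\mid X,\mathcal{D}_{\rm{fullOOD}})=\tfrac{1}{K}\bigl(1-\tfrac1K\bigr)=\tfrac{K-1}{K^2}>0$.

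Next I would compute the in-distribution probability from \cref{def4}. The key structural observation is that the per-class supports partition $\mathcal{D}_{\rm{ID}}$: for $i\neq i'$ a point cannot satisfy both $f(X,\theta_i)>f(X,\theta_{i'})+\delta$ and $f(X,\theta_{i'})>f(X,\theta_i)+\delta$, since adding the two inequalities forces $0>2\delta$, while \cref{def1} guarantees that every $X\in\mathcal{D}_{\rm{ID}}$ lies in some $\mathcal{D}_{\rm{ID}}^{c}$. Because the density $G(f(X,\theta_i))$ of \cref{def4} vanishes off $\mathcal{D}_{\rm{ID}}^{i}$, any fixed $X\in\mathcal{D}_{\rm{ID}}$ makes exactly one class-conditional density nonzero, so the class posterior is a point mass on the unique $c$ with $X\in\mathcal{D}_{\rm{ID}}^{c}$. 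Hence $p_c^{\rm{ID}}=1$, $p_j^{\rm{ID}}=0$ for $j\neq c$, and $\mathrm{var}(y[i]\mid X,\mathcal{D}_{\rm{ID}})=0$ for every coordinate. Combining the two computations yields $0<\tfrac{K-1}{K^2}$, which is the assertion.

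The main obstacle I anticipate is not the Bernoulli bookkeeping but justifying the concentration of the ID posterior: one must verify that the $\delta$-gap in \cref{def1} makes the sets $\mathcal{D}_{\rm{ID}}^{i}$ genuinely disjoint and exhaustive, and that the redistribution term inside $G$ does not reintroduce mass outside $\mathcal{D}_{\rm{ID}}^{i}$. If one instead prefers a softer model in which the posterior is not an exact point mass, the same conclusion survives as long as the $\delta$-separation pushes $p_i^{\rm{ID}}$ outside the interval $[1/K,\,1-1/K]$, since by the symmetry of $g$ this is exactly the regime where $g(p_i^{\rm{ID}})<g(1/K)=\tfrac{K-1}{K^2}$; making that separation quantitative in terms of $\delta$ and $\varepsilon$ is the only place where genuine work is needed.
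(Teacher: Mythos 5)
Your proposal is correct and reaches the stated inequality, but it diverges from the paper's own argument on the in-distribution side. Both proofs begin identically: write $\mathrm{var}(y[i])=p(1-p)$, apply Bayes' rule with the balance assumption $P(\theta_n)=1/K$, and use the independence of $X$ and $\theta$ on $\mathcal{D}_{\rm{fullOOD}}$ to get $p_i^{\rm O}=1/K$ and full-OOD variance $\tfrac{K-1}{K^2}$. The paper, however, then keeps a \emph{soft} ID posterior $P(y[i]=1\mid X,\mathcal{D}_{\rm{ID}})=f(X,\theta_i)/\sum_n f(X,\theta_n)$ built from the un-truncated densities $f$ (per its ``for easy notation'' convention), and uses the $\delta$-gap to argue $p_i>\tfrac{1+\delta}{K+\delta}>\tfrac1K$ for the winning class and $p_j<\tfrac1K$ otherwise, concluding via $p(1-p)$. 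You instead take the truncated conditionals $G$ of \cref{def4} at face value, observe that the supports $\mathcal{D}_{\rm{ID}}^i$ are disjoint (adding two $\delta$-gap inequalities gives $0>2\delta$) and cover $\mathcal{D}_{\rm{ID}}$, and conclude the ID posterior is a point mass, so the ID variance is exactly $0$. Your route is shorter and, notably, immune to a real weakness in the paper's step: $p_i>1/K$ alone does \emph{not} imply $p_i(1-p_i)<\tfrac1K(1-\tfrac1K)$, since $p(1-p)$ is increasing on $[0,1/2]$ — one needs $p_i>1-1/K$ (or $p_i<1/K$), which the $\delta$-gap does not guarantee for $K>2$ and small $\delta$; you correctly flag that the soft version requires pushing $p_i$ outside $[1/K,\,1-1/K]$, and that is precisely the unproved step in the paper. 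The cost of your literal reading is that the degenerate point-mass posterior trivializes the theorem and is at odds with the paper's surrounding discussion (e.g.\ the claim that misclassified ID samples have higher label variance presupposes a non-degenerate ID posterior), so the paper clearly intends the soft model even though its definitions, read strictly, support yours.
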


\begin{proof}
    According to Bayes Rule,
    \begin{align}
          & P(y[i]=1|X,\mathcal{D}_{\rm{ID}})                                                                                                            \\
        = & \frac{P(X|\theta_i,\mathcal{D}_{\rm{ID}})P(\theta_i|\mathcal{D}_{\rm{ID}})}{P(X|\mathcal{D}_{\rm{ID}})}                                      \\
        = & \frac{P(X|\theta_i, \mathcal{D}_{\rm{ID}})P(\theta_i|\mathcal{D}_{\rm{ID}})}{\sum_{n=1}^{K} P(X|\theta_n,\mathcal{D}_{\rm{ID}})P(\theta_n)}.
    \end{align}
    Because of the balance assumption,
    \begin{equation}
        P(\theta_n)=P(\theta_n|\mathcal{D}_{\rm{ID}})=\frac{1}{K},
    \end{equation}
    \begin{equation}
        \implies P(y[i]=1|X,\mathcal{D}_{\rm{ID}})=\frac{f(X,\theta_i)}{\sum_{n=1}^{K}f(X,\theta_n)}.
    \end{equation}
    Similarly, $P(y[i]=1|X,\mathcal{D}_{\rm{fullOOD}})=\frac{1}{K}$.\\
    According to our definition, on $\mathcal{D}_{\rm{ID}}$,
    \begin{equation}
        \exists i,\forall j\neq i,f(X,\theta_i)>f(X,\theta_j)+\delta,
    \end{equation}
    \begin{align}
        \implies &\frac{f(X,\theta_i)}{\sum_{n = 1}^{K} f(X,\theta_n)}>\frac{1+\delta}{K+\delta}>\frac{1}{K},\\
        &\frac{f(X,\theta_j)}{\sum_{n = 1}^{K} f(X,\theta_n)}<\frac{1}{K+\delta}<\frac{1}{K},\forall j\neq i.
    \end{align}
    Since $var(y[i])=P(y[i]=1)(1-P(y[i]=1))$,
    \begin{align}
        var(y[i]|X,\mathcal{D}_{\rm{ID}})<var(y[i]|X,\mathcal{D}_{\rm{fullOOD}}),\forall i={1,2,...,K}.
    \end{align}
\end{proof}

The proof can also indicate that on $\mathcal{D}_{\rm{ID}}$ and $\mathcal{D}_{\rm{semiOOD}}$, labels of misclassified samples have higher variances. With \cref{th1}, we believe that although the explicit labels of OOD data are unclear, their variance is higher than those of ID data. It can also be proved that $var(y[i]|\mathcal{D}_{\rm{ID}})<var(y[i]|\mathcal{D}_{\rm{semiOOD}})<var(y[i]|\mathcal{D}_{\rm{fullOOD}})$.

However, we only distinguish between $\mathcal{D}_{\rm{semiOOD}}$ and $\mathcal{D}_{\rm{fullOOD}}$ during testing, and during training we only use singular OOD dataset. This approach is practical as it aligns with typical scenarios where labeled  $\mathcal{D}_{\rm{ID}}$ is available, and $\mathcal{D}_{\rm{fullOOD}}$ can be constructed by using unrelated and comprehensive datasets such as CIFAR10 \cite{krizhevsky2009learning}, or by adding noise to $\mathcal{D}_{\rm{ID}}$. However, obtaining existing datasets that are uniformly distributed on $\mathcal{D}_{\rm{semiOOD}}$ is challenging, and creating a pseudo $\mathcal{D}_{\rm{semiOOD}}$ that appropriately bridges the gap between $\mathcal{D}_{\rm{ID}}$ and $\mathcal{D}_{\rm{fullOOD}}$ is also difficult \cite{hsu2020generalized}. Despite these challenges, ABNN can still effectively handle $\mathcal{D}_{\rm{semiOOD}}$ through its adversarial training procedure.

\begin{theorem}
    \label{th2}
    As $\sigma\rightarrow +\infty$, $P(softmax(x_1+\epsilon_1\cdot \sigma,x_2+\epsilon_2\cdot \sigma)[1]>\frac{1}{2})\rightarrow\frac{1}{2}$, where $\epsilon_1$ and $\epsilon_2$ are independent standard Gaussian noises.
\end{theorem}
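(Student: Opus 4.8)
The plan is to reduce the event whose probability we are computing to a single threshold-crossing event for one Gaussian variable, and then invoke continuity of the Gaussian CDF to pass to the limit.

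First I would rewrite the softmax condition. For a two-component softmax, $\mathrm{softmax}(a,b)[1] = e^{a}/(e^{a}+e^{b})$, which exceeds $\tfrac{1}{2}$ if and only if $a > b$. Applying this with $a = x_1 + \epsilon_1\sigma$ and $b = x_2 + \epsilon_2\sigma$, the event in question is exactly $\{\,x_1 + \epsilon_1\sigma > x_2 + \epsilon_2\sigma\,\}$. This step removes the nonlinearity entirely and leaves only a linear comparison of the perturbed logits.

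Next I would isolate the noise. For $\sigma > 0$, rearranging gives $\{\,\epsilon_1 - \epsilon_2 > (x_2 - x_1)/\sigma\,\}$. Since $\epsilon_1$ and $\epsilon_2$ are independent standard Gaussians, their difference $Z := \epsilon_1 - \epsilon_2$ is distributed as $N(0,2)$, which is continuous and symmetric about zero, so in particular $P(Z > 0) = \tfrac{1}{2}$. Thus the probability we care about equals $P\!\left(Z > (x_2 - x_1)/\sigma\right)$, i.e. one minus the value of a fixed continuous CDF at a shrinking argument.

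Finally I would take the limit: as $\sigma \to +\infty$ the threshold $(x_2 - x_1)/\sigma \to 0$, and continuity of the CDF of $Z$ yields $P\!\left(Z > (x_2 - x_1)/\sigma\right) \to P(Z > 0) = \tfrac{1}{2}$, which is the claim. There is no substantive obstacle here; the only detail requiring a word of care is the division by $\sigma$ (legitimate since we take $\sigma \to +\infty$ and need only $\sigma > 0$). The essential observation is that the softmax-above-$\tfrac{1}{2}$ event collapses to a logit comparison and that the difference of two Gaussians is again Gaussian, after which the limit is routine.
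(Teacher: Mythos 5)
Your proof is correct and follows essentially the same route as the paper: both reduce the softmax event to the logit comparison $x_1+\epsilon_1\sigma > x_2+\epsilon_2\sigma$, recognize the difference of the noises as a Gaussian, and conclude by continuity of its CDF as the threshold $(x_1-x_2)/\sigma$ tends to $0$. No gaps.
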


\begin{proof}
    \begin{align}
                        & \frac{exp\{x_1+\epsilon_1\cdot \sigma\}}{exp\{x_1+\epsilon_1\cdot \sigma\}+exp\{x_2+\epsilon_2\cdot \sigma\}}>\frac{1}{2} \\
        \Leftrightarrow & \ exp\{x_1+\epsilon_1\cdot \sigma\}>exp\{x_2+\epsilon_2\cdot \sigma\}                                                     \\
        \Leftrightarrow & \ x_1+\epsilon_1\cdot \sigma>x_2+\epsilon_2\cdot \sigma                                                                   \\
        \Leftrightarrow & \ \epsilon_2-\epsilon_1<(x_1-x_2)/\sigma.
    \end{align}
    Since $\epsilon_1$ and $\epsilon_2$ are independent Gaussian random variables, $\epsilon_2-\epsilon_1$ is also Gaussian distributed.\\
    Denote the variance of $\epsilon_2-\epsilon_1$ to be $\sigma_0^2$, then
    \begin{equation}
        P(\epsilon_2-\epsilon_1<(x_1-x_2)/\sigma)=\Phi((x_1-x_2)/(\sigma\cdot\sigma_0)),
    \end{equation}
    where $\Phi(\cdot)$ is the distribution function of a standard Gaussian distribution.\\
    As $\sigma\rightarrow +\infty$,
    \begin{equation}
        (x_1-x_2)/(\sigma\cdot\sigma_0)\rightarrow 0,
    \end{equation}
    \begin{equation}
        \implies \Phi((x_1-x_2)/(\sigma\cdot\sigma_0))\rightarrow \frac{1}{2},
    \end{equation}
    which means
    \begin{equation}
        P(softmax(x_1+\epsilon_1\cdot \sigma,x_2+\epsilon_2\cdot \sigma)[1]>\frac{1}{2})\rightarrow\frac{1}{2}.
    \end{equation}
\end{proof}

The proof can be easily extended to higher dimension. \cref{th2} indicates that if we treat the parameters of a Gaussian variational inference BNN as common weights plus noises, higher noise levels typically lead to higher uncertainty, regardless of the parameter values. In general, augmenting larger noises during forward propagation tends to result in larger variances of features and final outputs \cite{kong2020sde}. As an application, we can represent the uncertainty from $\mathcal{D}_{\rm{OOD}}$ by increasing the variance of certain activated Bayesian parameters while maintaining their expectations unchanged.

\section{Implementation}

\begin{figure*}[htbp]
    \centering
    \includegraphics[width=0.95\textwidth]{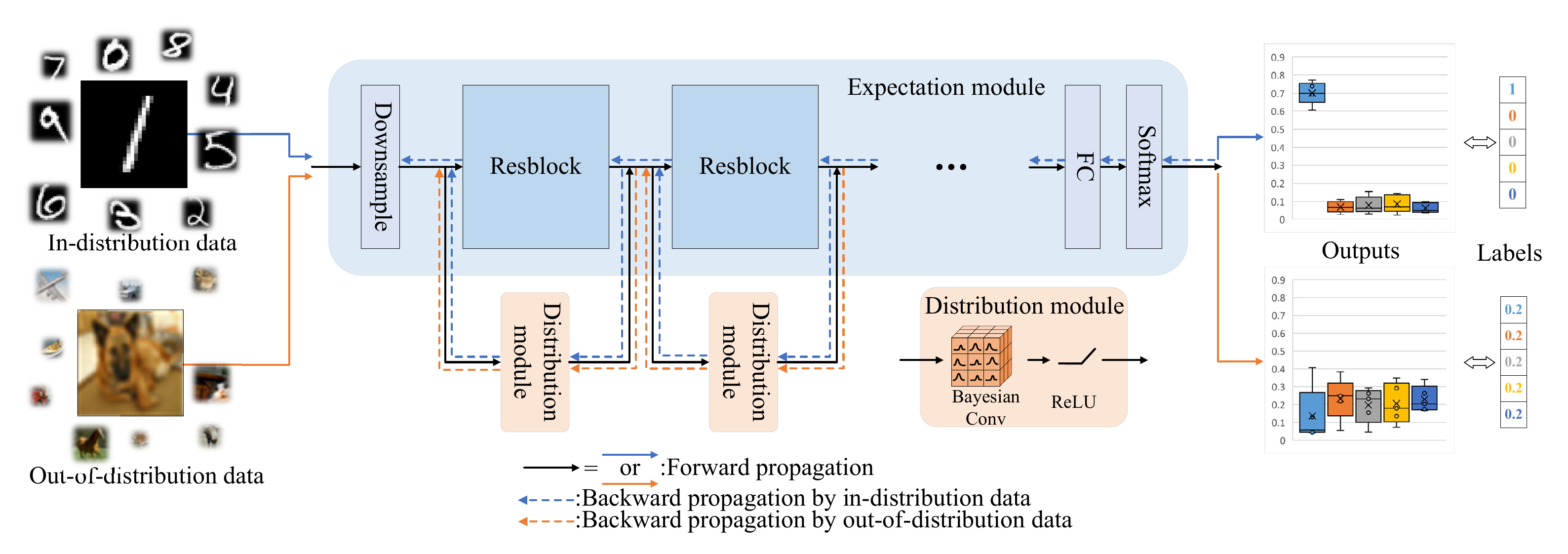}
    \caption{Components of ABNN. We take ResNet as the backbone and attach a distribution module to each Resblock. For out-of-distribution data, our network can catch its uncertainty and generate probabilistic results with large variances. For in-distribution data, our network can predict results with small variances.}
    \label{structure}
\end{figure*}

Our objective is to train a Gaussian variational inference BNN that meets the following criteria: (1) It accurately predicts ID data while adding appropriately small noises during forward propagation. (2) It adds large noises during forward propagation for OOD data, resulting in predictions with high variance. (3) It maintains high prediction accuracy for semi-OOD data. To achieve this, we begin by specifying pseudo labels for OOD data to derive pseudo posteriors and identify the most suitable training method. However, directly applying our training method may harm the prediction accuracy, so an attachment structure is proposed. Furthermore, we demonstrate that despite introducing OOD training, our approach remains consistent with typical BNN frameworks, as under certain assumptions, our objective functions can be interpreted as conventional ones \cite{BBB}.

\subsection{Method to Use OOD Data}
\label{3_1}

According to \cref{2}, the variances of pseudo labels (still denoted as $y$) for OOD data should be properly large. One common idea is to sample from $\rm{U}[0,1]$ to get $z_1,z_2,...,z_K$ and take their normalization $[\frac{z_1}{\sum_{i = 1}^{K} z_i},\frac{z_2}{\sum_{i = 1}^{K} z_i},...,\\\frac{z_K}{\sum_{i = 1}^{K} z_i}]$ as a label \cite{maennel2020neural}. While this approach meets our requirements, using random labels poses three challenges: (1) The same OOD sample will have different labels at different time, leading to instable training process; (2) This approach manually defines the variance of outputs, while the variance of true labels is unknown; (3) It can hardly be generalized to regression tasks.

As a result, we do not adopt random pseudo labels. Instead, we follow another common idea: the mean of all possible label values \cite{hendrycks2018deep}. For classification tasks with $K$ classes, our pseudo label is $[\frac{1}{K},\frac{1}{K},...,\frac{1}{K}]$, and for regression, our pseudo label is the mean value of all possible predictions.

However, the pseudo labels are differently handled compared to OE methods. If adopted directly, there is no guarantee that greater noises will be added for more OOD data, potentially resulting in the BNN failing to distinguish between different types of OOD data. To handle this problem, contrary to traditional BNN, we aim to increase the variances of certain parameters by maximizing the Kullback-Leibler (KL) divergence between variational distributions and pseudo posterior distributions on $\mathcal{D}_{\rm{OOD}}$  (still denoted as $p(\omega|\mathcal{D}_{\rm{OOD}})$). We first show that this operation does increase the variances of OOD predictions. By \cref{th2}, we can check that as the variance of parameters approaches to $+\infty,P(y[i]>y[j])\rightarrow\frac{1}{2},\forall i\neq j \in {1,2,...,K}$. Considering that $\sum_{i=1}^{K}y[i]=1$, so $\mathbb{E}[y[i]]\rightarrow\frac{1}{K}$. What remains to be validated is that the variances of some parameters can approach $+\infty$:
\begin{theorem}
    \label{assumption1}
    Assume that $q(\omega|\mu,\sigma^2)=\mathcal{N}(\mu,\sigma^2)$ and the prior distribution of $\omega$ is $\mathcal{N}(0,1)$. If $\sigma^2\geqslant 1$, $\mathop{\arg\max}\limits_{\sigma^2}\mathbb{D}_{KL}[q(\omega|\mu,\sigma^2)||p(\omega|\mathcal{D}_{\rm{OOD}})]=+\infty.$
\end{theorem}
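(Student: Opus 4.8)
The plan is to reduce the statement to an elementary one–variable calculus problem on the closed form of the Gaussian–Gaussian Kullback–Leibler divergence, once the pseudo posterior $p(\omega|\mathcal{D}_{\rm{OOD}})$ has been pinned down. The first and conceptually decisive step is to identify $p(\omega|\mathcal{D}_{\rm{OOD}})$. Here I would invoke the data description underlying \cref{def5}: on $\mathcal{D}_{\rm{fullOOD}}$ the samples carry no information about the class parameters, i.e. $X$ is independent of $\theta$. Since conditioning on such data cannot update the belief about the weight $\omega$, the pseudo posterior collapses onto the prior, so $p(\omega|\mathcal{D}_{\rm{OOD}})=p(\omega)=\mathcal{N}(0,1)$. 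I expect this reduction to be the main obstacle of the argument: everything afterwards is mechanical, but it is exactly this identification that turns an otherwise intractable divergence into a quantity with a usable closed form.

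Second, with $q=\mathcal{N}(\mu,\sigma^2)$ and $p=\mathcal{N}(0,1)$, I would compute the divergence directly. Splitting $\mathbb{D}_{KL}[q\|p]=\mathbb{E}_q[\log q]-\mathbb{E}_q[\log p]$, the first term is the negative Gaussian entropy $-\tfrac12\log(2\pi e\sigma^2)$, while the second is evaluated using $\mathbb{E}_q[\omega^2]=\mu^2+\sigma^2$. Collecting the terms gives the standard expression
\begin{equation}
    \mathbb{D}_{KL}[q(\omega|\mu,\sigma^2)\|p(\omega|\mathcal{D}_{\rm{OOD}})]=\frac{1}{2}\left(\sigma^2+\mu^2-1-\log\sigma^2\right).
\end{equation}

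Third, I would regard the right-hand side as a function of $s:=\sigma^2$ with $\mu$ fixed, so that the additive constant $\mu^2$ is irrelevant to the $\arg\max$, and differentiate: $\frac{\partial}{\partial s}\mathbb{D}_{KL}=\tfrac12(1-s^{-1})$. On the admissible region $s=\sigma^2\geqslant 1$ this derivative is nonnegative, and strictly positive for $s>1$, so the divergence is nondecreasing there; moreover it diverges to $+\infty$ as $s\to+\infty$ because the linear term $\tfrac12\sigma^2$ dominates the logarithmic term $-\tfrac12\log\sigma^2$. Hence the supremum over $\sigma^2\geqslant 1$ is not attained at any finite value, which is precisely the meaning of $\arg\max_{\sigma^2}\mathbb{D}_{KL}=+\infty$.

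Finally, I would remark that the hypothesis $\sigma^2\geqslant 1$ plays an essential and clarifying role rather than being cosmetic: the unconstrained divergence is minimized at $\sigma^2=1$ and is strictly decreasing on $(0,1)$, so restricting to $\sigma^2\geqslant 1$ simply removes that spurious descent branch and leaves a clean monotone increase toward $+\infty$. This both justifies the stated constraint and confirms that maximizing the divergence drives the variational variance unboundedly upward, which is exactly the mechanism the method relies on to inflate uncertainty on OOD inputs via \cref{th2}.
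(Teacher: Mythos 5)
Your calculus in steps two through four is correct, and the monotonicity of $\frac{1}{2}(\sigma^2+\mu^2-1-\log\sigma^2)$ on $\sigma^2\geqslant 1$ is exactly the content of the paper's own computation $\partial\mathbb{D}_{KL}[q\|p(\omega)]/\partial\sigma^2=\sigma-1/\sigma$. But your first step --- the identification $p(\omega|\mathcal{D}_{\rm{OOD}})=p(\omega)$ --- is a genuine gap, and it deletes precisely the part of the argument the paper spends most of its proof on. The independence asserted in \cref{def5} is between the input $X$ and the data-distribution parameters $\theta$, not between the OOD data and the network weights $\omega$. The pseudo posterior $p(\omega|\mathcal{D}_{\rm{OOD}})$ is constructed in \cref{3_1} by Bayes' rule from the pseudo labels $[\frac{1}{K},\dots,\frac{1}{K}]$, so it carries a nontrivial likelihood factor $p(\mathcal{D}_{\rm{OOD}}|\omega)$: weights whose softmax outputs are close to uniform on the OOD inputs are a posteriori more likely. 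If the posterior really collapsed onto the prior, the term $-\log p(X_{\rm{OOD}}|\omega^i)$ in the OOD update of \cref{alg} would be vacuous and the OOD training signal would not depend on the OOD data at all.

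Consequently the objective decomposes as $\mathbb{D}_{KL}[q\|p(\omega)]-\mathbb{E}_q[\log p(\mathcal{D}_{\rm{OOD}}|\omega)]+\mathrm{const}$, and you have only shown that the first summand increases on $\sigma^2\in[1,\infty)$ and diverges; a priori the second summand (the expected cross-entropy against the uniform label) could decrease fast enough to produce a finite interior maximizer. The paper closes this hole with a probabilistic estimate: using the same Gaussian-difference computation as in \cref{th2}, it shows that the event that the noisy cross-entropy exceeds the noiseless one has probability tending to $1$ as $\sigma^2\to+\infty$, so the likelihood term also pushes $\sigma^2$ upward, and the conclusion is accordingly stated only ``in probability.'' To repair your argument you would need either to justify discarding the likelihood term (which the paper's construction does not permit) or to supply an analogue of this second estimate.
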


\begin{proof}
    We take the variational distribution to be $\mathcal{N}(\mu,\sigma^2)$.
    \begin{align}
                & \mathbb{D}_{KL}[q(\omega|\mu,\sigma^2)||p(\omega|\mathcal{D}_{\rm{OOD}})]                                                                  \\
        =       & \int q(\omega|\mu,\sigma^2)log(\frac{q(\omega|\mu,\sigma^2)}{p(\omega|\mathcal{D}_{\rm{OOD}})})\,d\omega                                   \\
        =       & \int q(\omega|\mu,\sigma^2)log(\frac{q(\omega|\mu,\sigma^2)p(\mathcal{D}_{\rm{OOD}})}{p(\mathcal{D}_{\rm{OOD}}|\omega)p(\omega)})\,d\omega \\
        =       & \mathbb{D}_{KL}[q(\omega|\mu,\sigma^2)||p(\omega)]+\int q(\omega|\mu,\sigma^2)log(p(\mathcal{D}_{\rm{OOD}}))\,d\omega                      \\
                & -\int q(\omega|\mu,\sigma^2)log(p(\mathcal{D}_{\rm{OOD}}|\omega))\,d\omega                                                                 \\
        \approx & \mathbb{D}_{KL}[q(\omega|\mu,\sigma^2)||p(\omega)]+\int q(\omega|\mu,\sigma^2)log(p(\mathcal{D}_{\rm{OOD}}))\,d\omega                      \\
                & -\sum_{i = 1}^{n} log(p(\mathcal{D}_{\rm{OOD}}|\omega_i)),
    \end{align}
    where $\omega_1,\omega_2,...,\omega_n$ are independent samples form $q(\omega|\mu,\sigma^2)$.\\
    The second term is a constant, so
    \begin{align}
          & \mathop{\arg\max}\limits_{\sigma^2}\mathbb{D}_{KL}[q(\omega|\mu,\sigma^2)||p(\omega|\mathcal{D}_{\rm{OOD}})]                                    \\
        \approx  & \mathop{\arg\max}\limits_{\sigma^2}\mathbb{D}_{KL}[q(\omega|\mu,\sigma^2)||p(\omega)]-\sum_{i = 1}^{n} log(p(\mathcal{D}_{\rm{OOD}}|\omega_i)).
    \end{align}

    Given that the prior distribution of $\omega$ is $\mathcal{N}(0,1)$, which means $p(\omega)=\frac{1}{\sqrt{2\pi}}exp\{-\omega^2\}$, and $q(\omega|\mu,\sigma^2)=\frac{1}{\sqrt{2\pi\sigma^2}}exp\{-\frac{(\omega-\mu)^2}{\sigma^2}\}$, it can be easily checked that given $\mu=0$ and denote $\omega=\mu+\sigma\epsilon$,$\epsilon\sim \mathcal{N}(0,1)$,
    \begin{align}
        \frac{\partial \mathbb{D}_{KL}[q(\omega|\mu,\sigma^2)||p(\omega)]}{\partial \sigma^2}=\epsilon^2(\sigma-\frac{1}{\sigma})+\frac{(\epsilon^2-1)}{2}\frac{1}{\sigma^2}.
    \end{align}

    Taking expectation of $\epsilon$, we get
    \begin{align}
        \frac{\partial \mathbb{D}_{KL}[q(\omega|\mu,\sigma^2)||p(\omega)]}{\partial \sigma^2}=\sigma-\frac{1}{\sigma}.
    \end{align}

    At the beginning, $(\mu,\sigma^2)$ is initialized as $(0,1)$, and $\mathbb{D}_{KL}[q(\omega|\mu,\sigma^2)||p(\omega)]\rightarrow \infty$ in probability for both $\sigma\downarrow 0$ and $\sigma\uparrow +\infty$. If $\sigma^2>1$, $\sigma^2$ will approach to $+\infty$, so how the full loss formula reaches maximum is dependent on $-\sum_{i = 1}^{n} log(p(\mathcal{D}_{\rm{OOD}}|\omega_i))$.

    However, due to the unspecified form of $-\sum_{i = 1}^{n} log(p(\mathcal{D}_{\rm{OOD}}|\omega_i))$, which represents the Cross Entropy Loss ($L_{CE}$), we can not prove that when $\sigma^2 = 1$, $\frac{\partial -\sum_{i = 1}^{n} log(p(\mathcal{D}_{\rm{OOD}}|\omega_i))}{\partial \sigma^2}>0$, but we can show that as $\sigma^2\rightarrow+\infty$,
    \begin{align}
        P\{ & L_{CE}[(\frac{e^{x_1+\sigma\epsilon_1}}{e^{x_1+\sigma\epsilon_1}+e^{x_2+\sigma\epsilon_2}},\frac{e^{x_2+\sigma\epsilon_2}}{e^{x_1+\sigma\epsilon_1}+e^{x_2+\sigma\epsilon_2}}),(\frac{1}{2},\frac{1}{2})]
        \\>&L_{CE}[(\frac{e^{x_1}}{e^{x_1}+e^{x_2}},\frac{e^{x_2}}{e^{x_1}+e^{x_2}}),(\frac{1}{2},\frac{1}{2})]\}\rightarrow 1
    \end{align}
    where $\epsilon_1$ and $\epsilon_2$ are Gaussian distributed random variables with zero mean. Assume $x_1>x_2$, the possibility can be rewritten as
    \begin{align}
          & P[\frac{1}{2}log(\frac{e^{x_1+\sigma\epsilon_1}}{e^{x_1+\sigma\epsilon_1}+e^{x_1+\sigma\epsilon_1}})+\frac{1}{2}log(\frac{e^{x_2+\sigma\epsilon_2}}{e^{x_1+\sigma\epsilon_1}+e^{x_1+\sigma\epsilon_1}}) \\
          & >\frac{1}{2}log(\frac{e^{x_1}}{e^{x_1}+e^{x_2}})+\frac{1}{2}log(\frac{e^{x_1}}{e^{x_1}+e^{x_2}})]                                                                                                       \\
        = & P(\frac{e^{x_1+\sigma\epsilon_1}}{e^{x_1+\sigma\epsilon_1}+e^{x_2+\sigma\epsilon_2}}>\frac{e^{x_1}}{e^{x_1}+e^{x_2}})                                                                                   \\
        + & P(\frac{e^{x_1+\sigma\epsilon_1}}{e^{x_1+\sigma\epsilon_1}+e^{x_2+\sigma\epsilon_2}}<\frac{e^{x_2}}{e^{x_1}+e^{x_2}}).
    \end{align}

    Simplify the former term, we get
    \begin{align}
          & P(\frac{e^{x_1+\sigma\epsilon_1}}{e^{x_1+\sigma\epsilon_1}+e^{x_2+\sigma\epsilon_2}}>\frac{e^{x_1}}{e^{x_1}+e^{x_2}}) \\
        = & P(\frac{1}{e^{x_1}+e^{x_2+\sigma(\epsilon_2-\epsilon_1)}}>\frac{1}{e^{x_1}+e^{x_2}})                                  \\
        = & P(\sigma(\epsilon_2-\epsilon_1)<0)                                                                                    \\
        = & \frac{1}{2},
    \end{align}
    and simplify the latter term, we get
    \begin{align}
          & P(\frac{e^{x_1+\sigma\epsilon_1}}{e^{x_1+\sigma\epsilon_1}+e^{x_2+\sigma\epsilon_2}}<\frac{e^{x_2}}{e^{x_1}+e^{x_2}}) \\
        = & P(\frac{1}{1+e^{x_2-x_1+\sigma(\epsilon_2-\epsilon_1)}}<\frac{1}{e^{x_1-x_2}+1})                                      \\
        = & P(\epsilon_2-\epsilon_1>\frac{2(x_1-x_2)}{\sigma})\uparrow \frac{1}{2}.
    \end{align}

    Combine these simplifications, we find as $\sigma^2$ become greater, the probability that $-\sum_{i = 1}^{n} log(p(\mathcal{D}_{\rm{OOD}}|\omega_i))$ increases will become larger; and if $\sigma^2\rightarrow +\infty$, $-\sum_{i = 1}^{n} log(p(\mathcal{D}_{\rm{OOD}}|\omega_i))$ increases with probability $1$.

    In contrast, if $\sigma^2\rightarrow0$,
    \begin{align}
        P\{ & L_{CE}[(\frac{e^{x_1+\sigma\epsilon_1}}{e^{x_1+\sigma\epsilon_1}+e^{x_2+\sigma\epsilon_2}},\frac{e^{x_2+\sigma\epsilon_2}}{e^{x_1+\sigma\epsilon_1}+e^{x_2+\sigma\epsilon_2}}),(\frac{1}{2},\frac{1}{2})]
        \\>&L_{CE}[(\frac{e^{x_1}}{e^{x_1}+e^{x_2}},\frac{e^{x_2}}{e^{x_1}+e^{x_2}}),(\frac{1}{2},\frac{1}{2})]\}\rightarrow 0
    \end{align}

    To conclude, $+\infty$ is the solution of $\mathop{\arg\max}\limits_{\sigma^2}\mathbb{D}_{KL}[q(\omega|\mu,\sigma^2)||p(\omega|\mathcal{D}_{\rm{OOD}})]$ in probability.
\end{proof}

We show later in \cref{3_2} that combined with ID training, eventually our OOD training can catch the right amount of uncertainty from OOD data.

\subsection{Attachment Structure}
\label{3_2}

Our objective is to train our BNN on OOD data by maximizing $\mathbb{D}_{KL}[q(\omega|\mu,\sigma^2)||p(\omega|\mathcal{D}_{\rm{OOD}})]$. However, this operation not only increases $\sigma^2$ but also changes $\mu$. If we employ this methodology to train a traditional BNN, the classification accuracy on ID data will decline, as demonstrated by our ablation experiment. To preserve high performance on ID data, we aim to separate the prediction ability and uncertainty estimation ability of a BNN by designing an attached structure. ABNN is composed of two types of modules: an expectation module and several distribution modules. For clarity, we illustrate our approach using ResNet as the backbone model. One direct advantage of the attachment structure is its flexibility: the backbone model can be replaced with any network architecture while the uncertainty estimation ability is determined solely by the attachments. We demonstrate the performance of ABNN with different backbones in \cref{backbone}, and the structure is depicted in \cref{structure}.

The expectation module is constructed by common network layers, responsible for the original task. We have no constraints for the expectation module on OOD data, but we believe that it can classify semi-OOD data well due to the generalization ability of DNNs.

The distribution modules are constructed by Bayesian layers \cite{BBB}, fitting the posteriors of some specific layers given both ID data and OOD data. On $\mathcal{D}_{\rm{ID}}$, they catch uncertainty by minimizing the KL-divergence between variational distributions and the true posterior distributions. On $\mathcal{D}_{\rm{OOD}}$, they catch uncertainty by maximizing the variance of certain parameters. Researchers show that just a few Bayesian layers are sufficient to catch enough uncertainty \cite{kristiadi2020being}, so we only simulate the distributions of some specific layers. As a result, ABNN only contains a few Bayesian modules that are ``attached'' to a common DNN, demanding only a bit more resources.

The objective function for training ABNN is:
% \begin{align}
%       \mathop{\min}\limits_{\omega_1}\mathbb{E}_{x\sim\mathcal{D}_{\rm{ID}}}[\mathbb{E}[L(x)]]                             
%     + \mathop{\min}\limits_{\omega_2}\mathbb{D}_{KL}[q(\omega|\mu,\sigma^2)||p(\omega|\mathcal{D}_{\rm{ID}})]              
%     + \mathop{\max}\limits_{\omega_2}\alpha\cdot\mathbb{D}_{KL}[q(\omega|\mu,\sigma^2)||p(\omega|\mathcal{D}_{\rm{OOD}})]
% \end{align}
\begin{align}
      & \mathop{\min}\limits_{\omega_1}\mathbb{E}_{x\sim\mathcal{D}_{\rm{ID}}}[\mathbb{E}[L(x)]]                             \\
    + & \mathop{\min}\limits_{\omega_2}\mathbb{D}_{KL}[q(\omega|\mu,\sigma^2)||p(\omega|\mathcal{D}_{\rm{ID}})]              \\
    + & \mathop{\max}\limits_{\omega_2}\alpha\cdot\mathbb{D}_{KL}[q(\omega|\mu,\sigma^2)||p(\omega|\mathcal{D}_{\rm{OOD}})],
\end{align}
where $L$ is the loss function that depends on the task, $\omega_1$ is the parameters of the expectation module, $\omega_2$ is the parameters of distribution modules and $\alpha$ is a hyperparameter that guarantees the performance of ABNN on $\mathcal{D}_{\rm{ID}}$. $\alpha$ does not have a significant effect on the performance of ABNN, and we chose $\alpha=0.95$ for experiments. Detailed discussions of $\alpha$ is shown in \cref{alpha}. We only use one kind of OOD data for training.

During each iteration, we train ABNN in three steps: (1) First, distribution modules are frozen and $\mathbb{E}_{x\sim\mathcal{D}_{\rm{ID}}}[\mathbb{E}[L(x)]]$ is minimized; (2) Then, the expectation module is frozen, and we minimize $\mathbb{D}_{KL}[q(\omega|\mu,\sigma^2)||p(\omega|\mathcal{D}_{\rm{ID}})]$; (3) At last, we keep freezing the expectation module and maximize $\alpha\cdot\mathbb{D}_{KL}[q(\omega|\mu,\sigma^2)||p(\omega|\mathcal{D}_{\rm{OOD}})]$.

Our training procedure is similar to an EM approach \cite{li2023self,zhou2020deep}: We partition the parameters of BNNs from $\omega=(\mu,\sigma^2)$ into $(\omega_1,\omega_2)=((\mu_1 ,\sigma_1^2\mu_2),(\mu_2,\sigma_2^2))$. We try to determine the expectation $\mu_1$ of $\omega_1$ as the latent variables ensuring predictive ability, and identify $(\mu_2,\sigma_2^2)$ as a balanced distribution between $p(\omega|\mathcal{D}_{\rm{ID}})$ and $p(\omega|\mathcal{D}_{\rm{OOD}})$. EM approaches are sensitive to the initial value of latent variables, but ABNN can be well initialized by pre-training. Our training process is summarized in \cref{alg}.

\begin{algorithm}[htbp]
    \caption{Training of ABNN. $f_\omega(\cdot)$ is the whole network, $\omega =  (\omega_1,\omega_2)$, $\omega_1$ represents parameters of expectation modules, $\omega_2=(\mu,\sigma^2)$ represents parameters of distribution modules, $N$ is the number of forward propagation times, L is a loss function.\label{alg}}
    \begin{algorithmic}
        \STATE Initialize $f_\omega(\cdot)$
        \FOR{training iterations}
        \STATE Sample minibatch data $X_{\rm{ID}}$ from $\mathcal{D}_{\rm{ID}}$
        \FOR{i=1; i$\leqslant $N; i++}
        \STATE Sample $\epsilon^i$ from $\mathcal{N}(0,1)$
        \STATE $\omega_2^i=\mu+\sigma\epsilon^i$
        \STATE $\omega^i=(\omega_1,\omega_2^i)$
        \STATE Update $\omega_1$ by $\mathop{\min}L(f_{\omega^i}(X_{\rm{ID}}))$
        \ENDFOR
        \STATE Update $\omega_2$ by $\mathop{\min}\sum_{i = 1}^{N}log(q(\omega^i|\mu,\sigma^2))-log(p(\omega^i))-log(p(X_{\rm{ID}}|\omega^i))  $
        \STATE Sample minibatch data $X_{\rm{OOD}}$ from $\mathcal{D}_{\rm{OOD}}$
        \FOR{i=1; i$\leqslant $N; i++}
        \STATE Sample $\epsilon^i$ from $\mathcal{N}(0,1)$
        \STATE $\omega_2^i=\mu+\sigma\epsilon^i$
        \STATE $\omega^i=(\omega_1,\omega_2^i)$
        \ENDFOR
        \STATE Update $\omega_2$ by $\mathop{\max}\alpha\sum_{i = 1}^{N}log(q(\omega^i|\mu,\sigma^2))-log(p(\omega^i))-log(p(X_{\rm{OOD}}|\omega^i))$
        \ENDFOR
    \end{algorithmic}
\end{algorithm}

Our training procedure can catch the appropriate amount of uncertainty through the adversarial principle between ID training and OOD training. We hope to increase the variance of certain parameters to represent OOD uncertainty while maintaining performance on ID data, which means $q(\omega|\mu,\sigma^2)$ should still fit $p(\omega|\mathcal{D}_{\rm{ID}})$ well. The formulation of the third loss term mirrors that of the second one and is governed by a hyperparameter. Consequently, the rate of parameter adjustments during OOD training remains proportional to that during ID training. As a result, only parameters with minimal impact on $\mathbb{D}_{KL}[q(\omega|\mu,\sigma^2)||p(\omega|\mathcal{D}_{\rm{ID}})]$ (e.g. convs that fail to catch meaningful features from ID data) can be primarily optimized by OOD loss. During testing, the more OOD features an input includes, the more parameters with high variance (e.g. convs that only catch OOD features) will be activated, resulting in increased uncertainty. Furthermore, we show in \cref{reinterpretation} that the two latter loss terms can still be interpreted as a traditional BNN loss.

\subsection{Reinterpretation of Loss Terms}
\label{reinterpretation}

Although ABNN is trained on ID data and OOD data separately, we show that it is still under the Bayesian Learning framework. The second and third loss terms can be merged into one KL divergence.

From \cref{assumption1}, we can get
\begin{align}
      & \mathop{\arg\max}\limits_{\mu,\sigma^2}\mathbb{D}_{KL}[q(\omega|\mu,\sigma^2)||p(\omega|\mathcal{D}_{\rm{OOD}})] \\
    = & \mathop{\arg\max}\limits_{\mu,\sigma^2}\mathbb{D}_{KL}[q(\omega|\mu,\sigma^2)||p(\omega)]                        \\
      & -\int q(\omega|\mu,\sigma^2)log(p(\mathcal{D}_{\rm{OOD}}|\omega))\,d\omega.
\end{align}

Similar,
\begin{align}
      & \mathop{\arg\min}\limits_{\mu,\sigma^2}\mathbb{D}_{KL}[q(\omega|\mu,\sigma^2)||p(\omega|\mathcal{D}_{\rm{ID}})] \\
    = & \mathop{\arg\min}\limits_{\mu,\sigma^2}\mathbb{D}_{KL}[q(\omega|\mu,\sigma^2)||p(\omega)]                       \\
      & -\int q(\omega|\mu,\sigma^2)log(p(\mathcal{D}_{\rm{ID}}|\omega))\,d\omega.
\end{align}

The first term works like a normalization and the second term is the initial loss function (e.g. cross entropy).
\begin{align}
      & \mathop{\min}\limits_{\mu,\sigma^2}\mathbb{D}_{KL}[q(\omega|\mu,\sigma^2)||p(\omega|\mathcal{D}_{\rm{ID}})]                                     \\
      & + \mathop{\max}\limits_{\mu,\sigma^2}\alpha\cdot\mathbb{D}_{KL}[q(\omega|\mu,\sigma^2)||p(\omega|\mathcal{D}_{\rm{OOD}})]                       \\
    = & \mathop{\min}\limits_{\mu,\sigma^2}\mathbb{D}_{KL}[q(\omega|\mu,\sigma^2)||p(\omega|\mathcal{D}_{\rm{ID}})]                                     \\
      & - \alpha\cdot\mathbb{D}_{KL}[q(\omega|\mu,\sigma^2)||p(\omega|\mathcal{D}_{\rm{OOD}})]                                                          \\
    = & \mathop{\min}\limits_{\mu,\sigma^2}(1-\alpha)\cdot\mathbb{D}_{KL}[q(\omega|\mu,\sigma^2)||p(\omega)]                                            \\
      & -\int q(\omega|\mu,\sigma^2)(log(p(\mathcal{D}_{\rm{ID}}|\omega))                                                                               \\
      & -\alpha\cdot log(p(\mathcal{D}_{\rm{OOD}}|\omega)))\,d\omega                                                                                    \\
    = & \mathop{\min}\limits_{\mu,\sigma^2}\mathbb{D}_{KL}[q(\omega|\mu,\sigma^2)||p(\omega)]                                                           \\
      & -\int q(\omega|\mu,\sigma^2)log(\frac{p(\mathcal{D}_{\rm{ID}}|\omega)}{p(\mathcal{D}_{\rm{OOD}}|\omega)^\alpha})^{\frac{1}{1-\alpha}}\,d\omega.
\end{align}

The second term can be interpreted as a transformation of initial loss terms, and if we assume $p(D|\omega)=(\frac{p(\mathcal{D}_{\rm{ID}}|\omega)}{p(\mathcal{D}_{\rm{OOD}}|\omega)^\alpha})^{\frac{1}{1-\alpha}}, \alpha \in (0,1)$,the objective functions can be reinterpreted as
\begin{align}
      & \mathop{\min}\limits_{\mu,\sigma^2}\mathbb{D}_{KL}[q(\omega|\mu,\sigma^2)||p(\omega|\mathcal{D}_{\rm{ID}})]               \\
      & + \mathop{\max}\limits_{\mu,\sigma^2}\alpha\cdot\mathbb{D}_{KL}[q(\omega|\mu,\sigma^2)||p(\omega|\mathcal{D}_{\rm{OOD}})] \\
    = & \mathop{\min}\limits_{\mu,\sigma^2}\mathbb{D}_{KL}[q(\omega|\mu,\sigma^2)||p(\omega|D)].
\end{align}

\section{Experiments}
\label{experiment}

\begin{table*}[htbp]
    \centering
    \caption{Cluster results on MNIST, SVHN and CIFAR10.}\label{full_result_3_1}
    \begin{small}
        \resizebox{.95\linewidth}{!}{
            \begin{tabular}{l|ccc|ccc|ccc|ccc|ccc|ccc}
                \toprule
                Model    & \multicolumn{3}{c}{BBP}    & \multicolumn{3}{c}{VBOE}   & \multicolumn{3}{c}{SDE-Net} & \multicolumn{3}{c}{OE}     & \multicolumn{3}{c}{WOODS}  & \multicolumn{3}{c}{ABNN}                                                                                    \\
                \midrule
                \multirow{3}{*}{\makecell[l]{Confusion                                                                                                                                                                                                                                   \\ matrix}}& 9000 & 0 & 0 & 9000 & 0 & 0 & 9000 & 0 & 0 & 9000 & 0 & 0 & 9000 & 0 & 0 & 9000 & 0 & 0\\
                ~        & 9000                       & 0                          & 0                           & 5929                       & 3071                       & 0                          & 713 & 320 & 7967 & 9000 & 0    & 0    & 1183 & 7817 & 0   & 5894 & 3106 & 0    \\
                ~        & 3850                       & 2689                       & 2461                        & 1881                       & 2800                       & 4319                       & 9   & 19  & 8972 & 4104 & 2506 & 2390 & 338  & 7797 & 865 & 1253 & 1457 & 6290 \\
                \midrule
                Accuracy & \multicolumn{3}{c}{42.4\%} & \multicolumn{3}{c}{60.7\%} & \multicolumn{3}{c}{67.7\%}  & \multicolumn{3}{c}{42.2\%} & \multicolumn{3}{c}{65.5\%} & \multicolumn{3}{c}{68.1\%}                                                                                  \\
                \hline
            \end{tabular}}
    \end{small}
\end{table*}

\begin{table*}[htbp]
    \centering
    \caption{Cluster results on CIFAR10 and CIFAR100.}\label{full_result_3_2}
    \begin{small}
            \begin{tabular}{l|ccc|ccc|ccc|ccc|ccc|ccc}
                \toprule
                Model    & \multicolumn{3}{c}{BBP}    & \multicolumn{3}{c}{VBOE}   & \multicolumn{3}{c}{SDE-Net} & \multicolumn{3}{c}{OE}     & \multicolumn{3}{c}{WOODS}  & \multicolumn{3}{c}{ABNN}                                                             \\
                \midrule
                \multirow{3}{*}{\makecell[l]{Confusion                                                                                                                                                                                                            \\ matrix}}& 100 & 0 & 0 & 100 & 0 & 0 & 100 & 0 & 0 & 100 & 0 & 0 & 100 & 0 & 0 & 100 & 0 & 0\\
                ~        & 5                          & 37                         & 58                          & 5                          & 95                         & 0                          & 3 & 82 & 15 & 10 & 90 & 0  & 4 & 96 & 0   & 9 & 91 & 0  \\
                ~        & 0                          & 84                         & 16                          & 0                          & 44                         & 56                         & 0 & 46 & 54 & 0  & 38 & 62 & 0 & 0  & 100 & 0 & 32 & 68 \\
                \midrule
                Accuracy & \multicolumn{3}{c}{51.0\%} & \multicolumn{3}{c}{83.7\%} & \multicolumn{3}{c}{68.7\%}  & \multicolumn{3}{c}{84.0\%} & \multicolumn{3}{c}{98.7\%} & \multicolumn{3}{c}{86.3\%}                                                           \\
                \hline
            \end{tabular}
    \end{small}
\end{table*}

\begin{figure*}[htbp]
    \centering
    \subcaptionbox{BBP}[0.24\linewidth]{
        \includegraphics[width=1\linewidth]{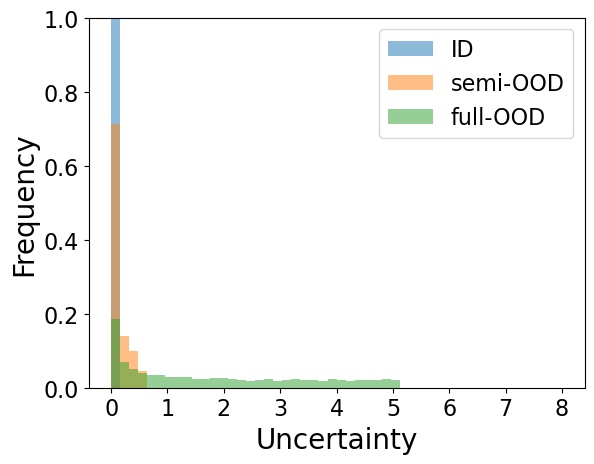}
        \includegraphics[width=1\linewidth]{pic/cluster_visualization_BBP.jpg}
    }
    \subcaptionbox{VBOE}[0.24\linewidth]{
        \includegraphics[width=1\linewidth]{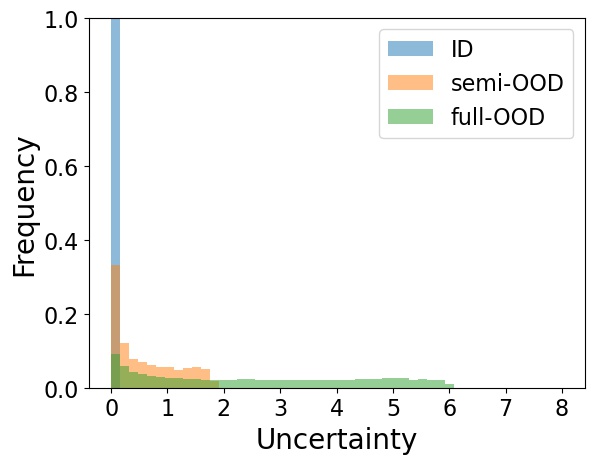}
        \includegraphics[width=1\linewidth]{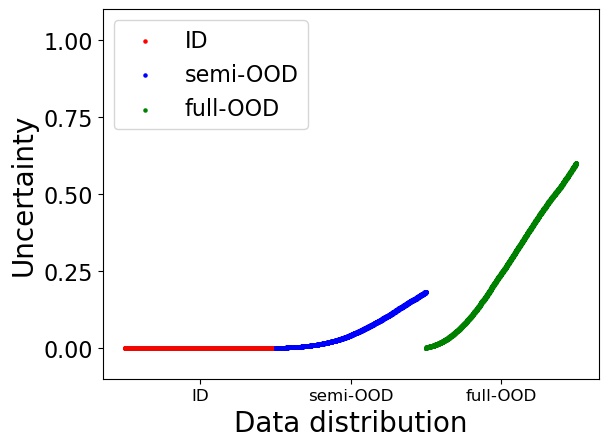}
    }
    \subcaptionbox{SDE-Net}[0.24\linewidth]{
        \includegraphics[width=1\linewidth]{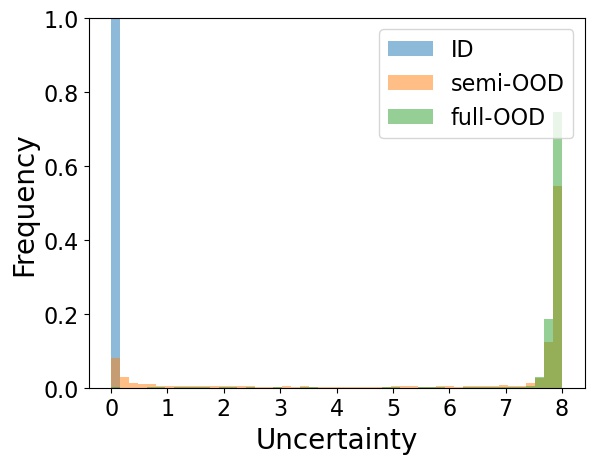}
        \includegraphics[width=1\linewidth]{pic/cluster_visualization_sdenet.jpg}
    }
    \subcaptionbox{OE}[0.24\linewidth]{
        \includegraphics[width=1\linewidth]{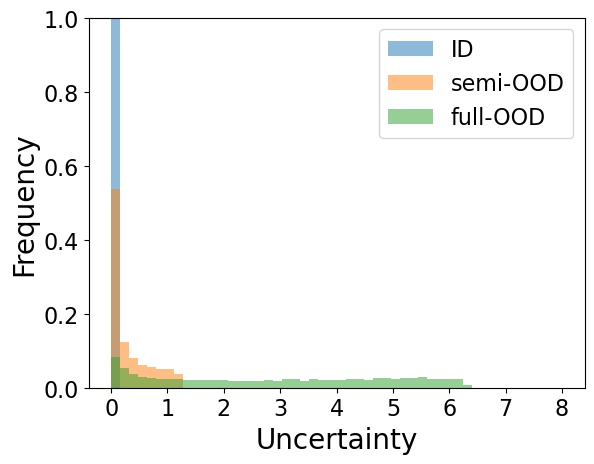}
        \includegraphics[width=1\linewidth]{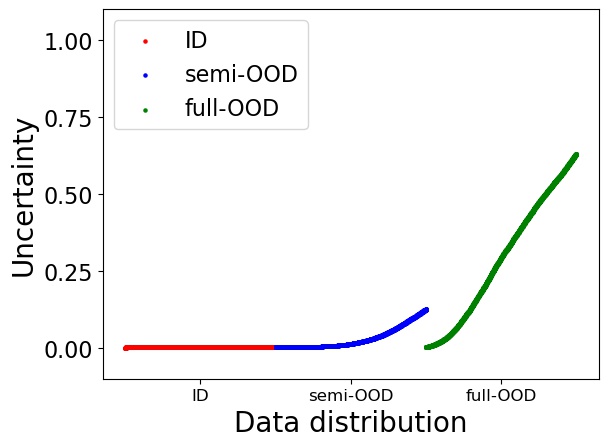}
    }
    \subcaptionbox{WOODS}[0.24\linewidth]{
        \includegraphics[width=1\linewidth]{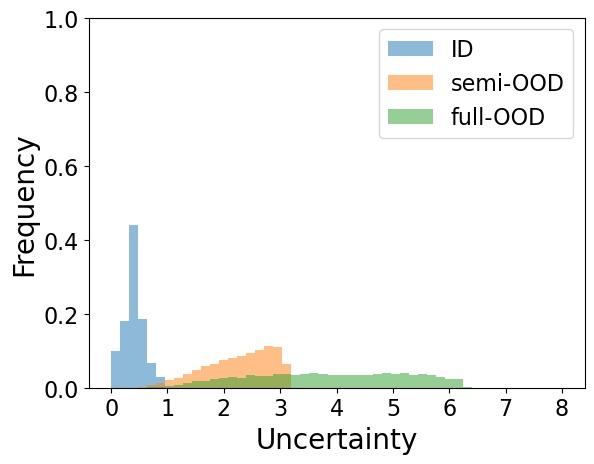}
        \includegraphics[width=1\linewidth]{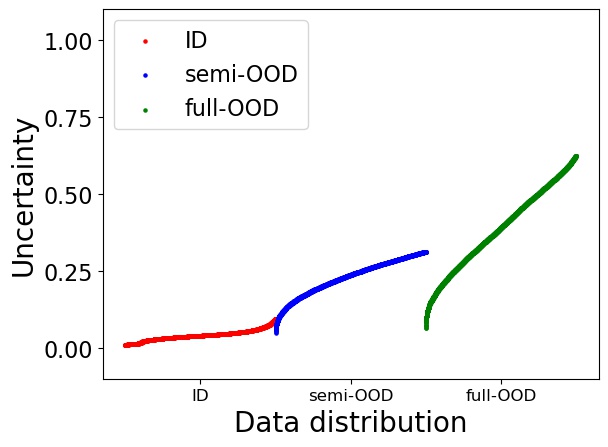}
    }
    \subcaptionbox{ABNN}[0.24\linewidth]{
        \includegraphics[width=1\linewidth]{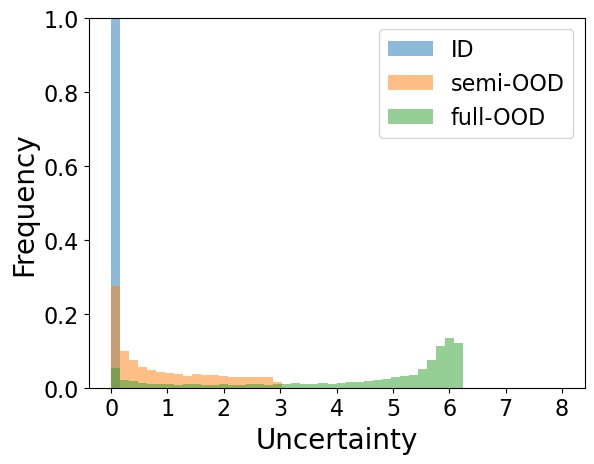}
        \includegraphics[width=1\linewidth]{pic/cluster_visualization_ABNN.jpg}
    }
    \subcaptionbox{Ideal}[0.24\linewidth]{
        \includegraphics[width=1\linewidth]{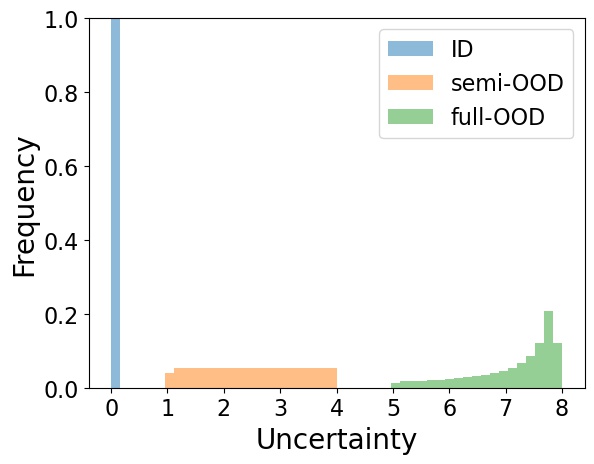}
        \includegraphics[width=1\linewidth]{pic/cluster_visualization_ideal.jpg}
    }
    \caption{Uncertainty distributions on MNIST, SVHN and CIFAR10. The first row is uncertainty distributions; the second row is ordered uncertainty on different datasets. Distributions that are more separated indicate better performances.}
    \label{compare_full_mnist}
\end{figure*}

\begin{figure*}[htbp]
    \centering
    \subcaptionbox{BBP}[0.24\linewidth]{
        \includegraphics[width=1\linewidth]{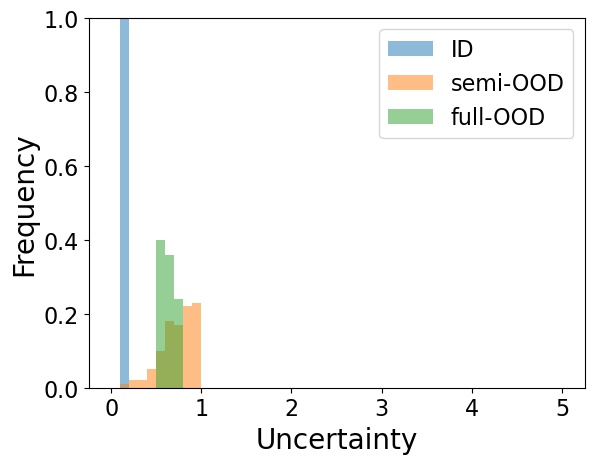}
        \includegraphics[width=1\linewidth]{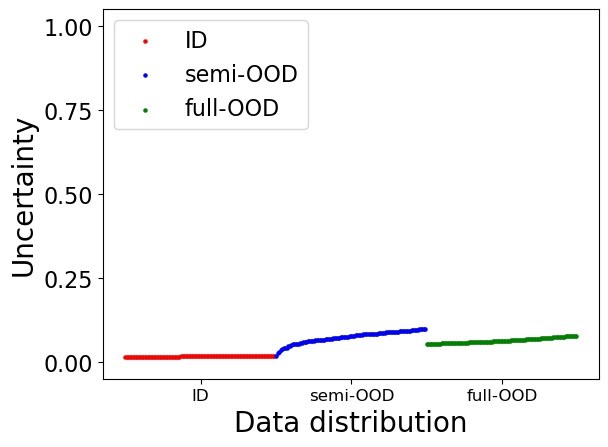}
    }
    \subcaptionbox{VBOE}[0.24\linewidth]{
        \includegraphics[width=1\linewidth]{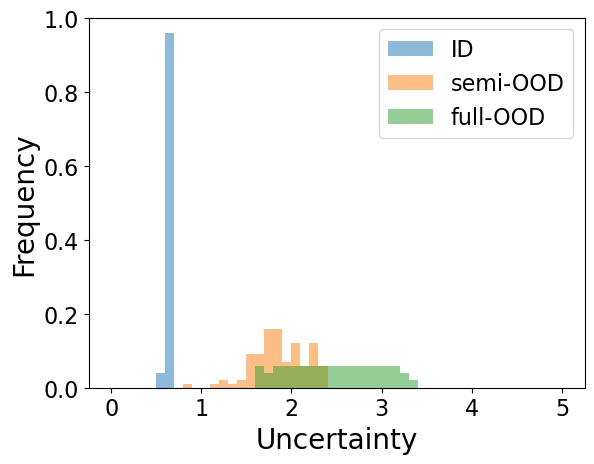}
        \includegraphics[width=1\linewidth]{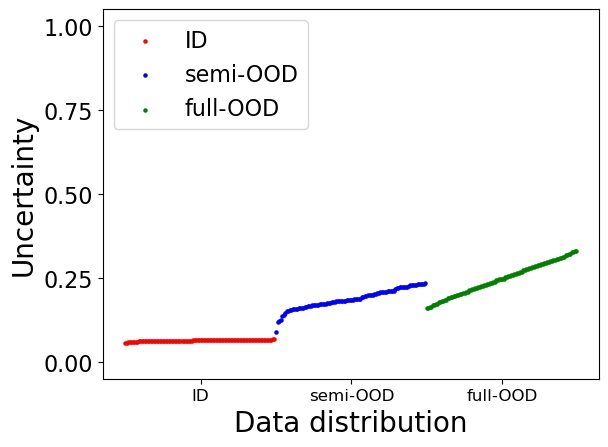}
    }
    \subcaptionbox{SDE-Net}[0.24\linewidth]{
        \includegraphics[width=1\linewidth]{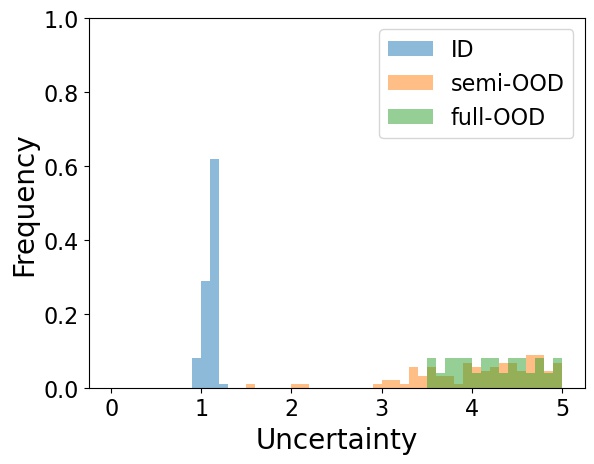}
        \includegraphics[width=1\linewidth]{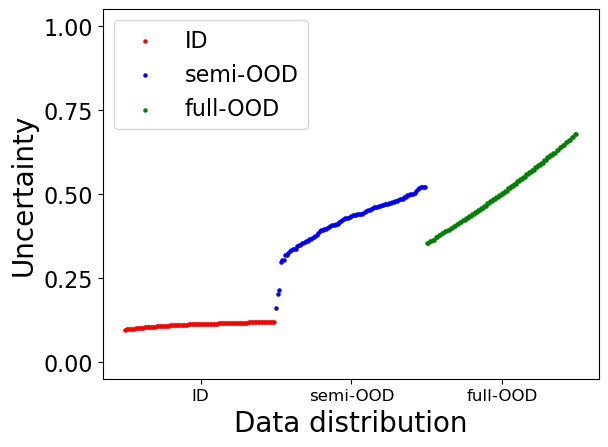}
    }
    \subcaptionbox{OE}[0.24\linewidth]{
        \includegraphics[width=1\linewidth]{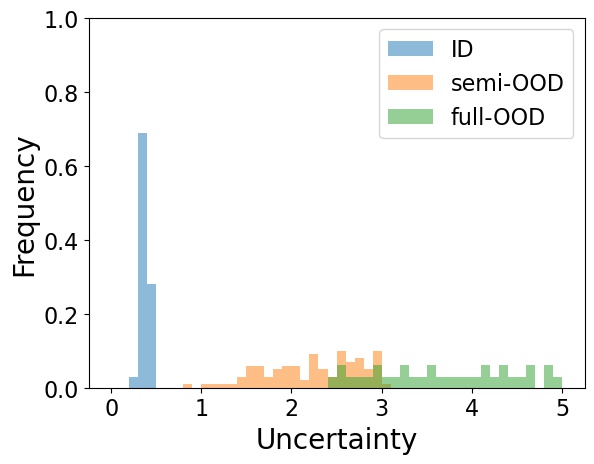}
        \includegraphics[width=1\linewidth]{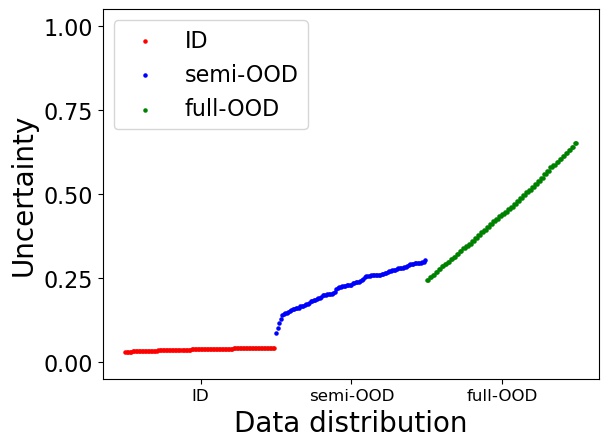}
    }
    \subcaptionbox{WOODS}[0.24\linewidth]{
        \includegraphics[width=1\linewidth]{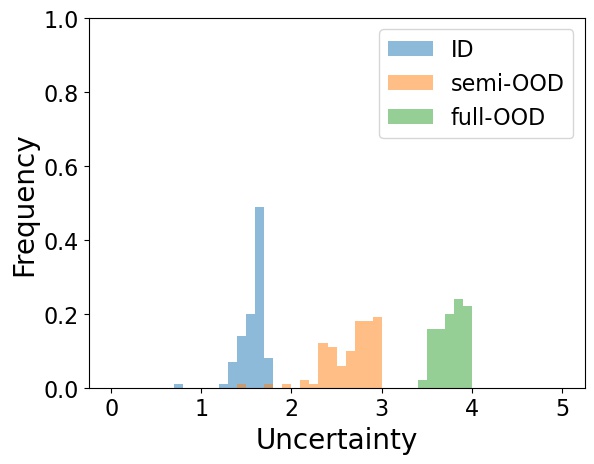}
        \includegraphics[width=1\linewidth]{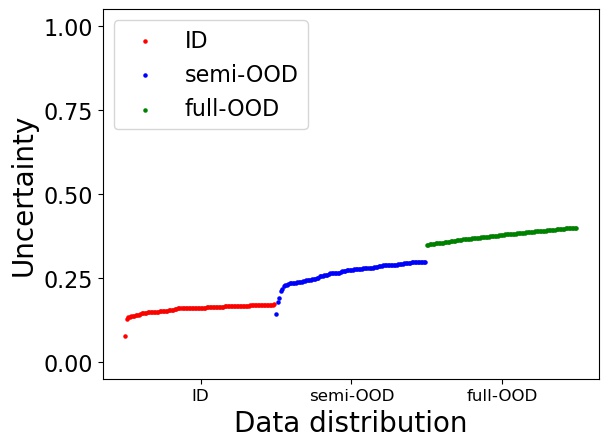}
    }
    \subcaptionbox{ABNN}[0.24\linewidth]{
        \includegraphics[width=1\linewidth]{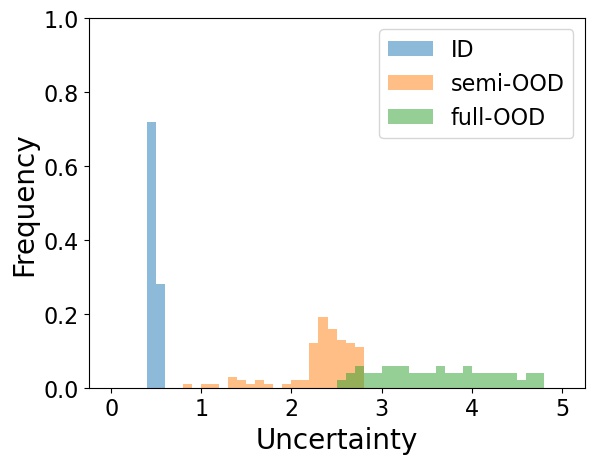}
        \includegraphics[width=1\linewidth]{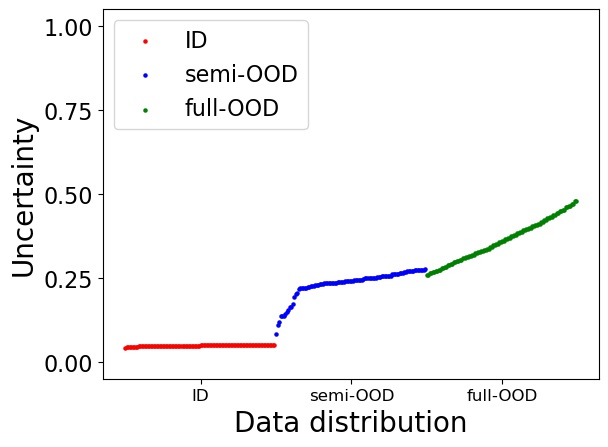}
    }
    \subcaptionbox{Ideal}[0.24\linewidth]{
        \includegraphics[width=1\linewidth]{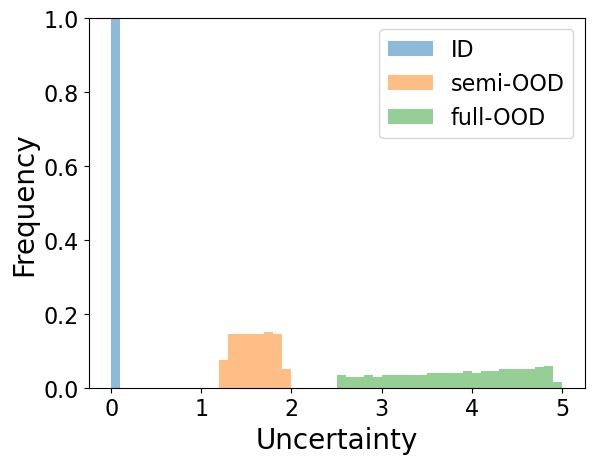}
        \includegraphics[width=1\linewidth]{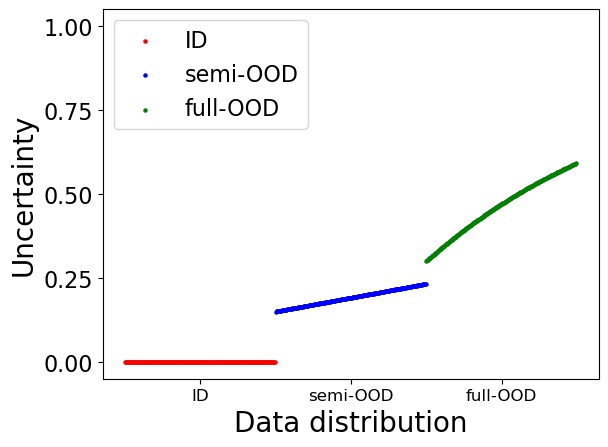}
    }
    \caption{Uncertainty distributions on CIFAR10 and CIFAR100. The first row is uncertainty distributions; the second row is ordered uncertainty on different datasets. Distributions that are more separated indicate better performances.}
    \label{compare_full_cifar}
\end{figure*}

\begin{table*}[htbp]
    \centering
    \caption{Effect of different strategies. We train ABNN on MNIST and take SVHN as OOD data. ``OOD'' means whether OOD data are used; ``attachment'' means whether the structure is an attachment version; ``constant'' means whether the labels are mean values or random variables; ``maximize'' means  whether we maximize KL-divergence on $\mathcal{D}_{\rm{OOD}}$; ``pseudo'' means whether the OOD data is pseudo data or CIFAR10 during training.}
    \begin{small}
        \resizebox{.95\linewidth}{!}{
            \begin{tabular}{cccccc|ccccccr}
                \toprule
                Index & OOD       & Attachment & Constant & Maximize & Pseudo   & \makecell[c]{Classification                                                                             \\accuracy} & \makecell[c]{TNR\\at TPR 95\%} & AUROC & \makecell[c]{Detection\\accuracy} & \makecell[c]{AUPR\\in} & \makecell[c]{AUPR\\out}\\
                \midrule
                (a)   & $\surd$   & $\surd$    & $\surd$  & $\surd$  & $\surd$  & 99.5$\pm$0.0                & 98.3$\pm$1.2  & 99.4$\pm$0.4 & 97.2$\pm$0.9 & 98.7$\pm$0.7 & 99.7$\pm$0.2 \\
                (b)   & $\times $ & $\surd$    & $-$      & $-$      & $-$      & 99.4$\pm$0.0                & 91.7$\pm$2.3  & 97.6$\pm$0.6 & 94.0$\pm$1.1 & 94.8$\pm$2.6 & 98.8$\pm$0.1 \\
                (c)   & $\surd$   & $\times$   & $\surd$  & $\surd$  & $\surd$  & 99.3$\pm$0.0                & 91.6$\pm$1.0  & 97.0$\pm$0.5 & 93.5$\pm$0.5 & 92.4$\pm$1.2 & 99.0$\pm$0.1 \\
                (d)   & $\surd$   & $\surd$    & $\times$ & $-$      & $\surd$  & 99.5$\pm$0.1                & 55.9$\pm$20.4 & 94.2$\pm$2.2 & 92.3$\pm$2.5 & 91.6$\pm$2.5 & 96.7$\pm$1.3 \\
                (e)   & $\surd$   & $\surd$    & $\surd$  & $\times$ & $\surd$  & 99.5$\pm$0.0                & 96.1$\pm$1.1  & 98.9$\pm$0.2 & 95.8$\pm$0.6 & 97.2$\pm$0.7 & 99.6$\pm$0.1 \\
                (f)   & $\surd$   & $\surd$    & $\surd$  & $\surd$  & $\times$ & 99.5$\pm$0.0                & 100.0$\pm$0.0 & 98.7$\pm$0.6 & 99.2$\pm$0.3 & 99.1$\pm$0.4 & 97.6$\pm$1.0 \\
                \bottomrule
            \end{tabular}}
    \end{small}
    \label{result4}
\end{table*}

\begin{table*}[htbp]
    \centering
    \caption{Classification and out-of-distribution detection results. All values are in percentage, and larger values indicates better performance. We mark non-Bayesian models by $\star$ and Bayesian models by $\diamond$. We use \textbf{bold} font to highlight the best results in both groups. ABNN can achieve comparable performances to OOD detection specialized models.}
    \begin{small}
        \resizebox{0.95\linewidth}{!}{
            \begin{tabular}{ccccccccccc}
                \toprule
                ID                   & OOD                     & Model               & Parameters     & \makecell[c]{Classification                                                                                                                          \\accuracy} & \makecell[c]{TNR\\at TPR 95\%} & AUROC & \makecell[c]{Detection\\accuracy} & \makecell[c]{AUPR\\in} & \makecell[c]{AUPR\\out}\\
                \midrule
                \multirow{9}*{MNIST} & \multirow{9}*{SEMEION}  & Threshold$\star$    & 0.58M          & 99.5$\pm$0.0                & 94.0$\pm$1.4          & 98.3$\pm$0.3          & 94.8$\pm$0.7          & 99.7$\pm$0.1           & 89.4$\pm$1.1          \\
                ~                    & ~                       & DeepEnsemble$\star$ & 0.58M$\times$5 & \textbf{99.6$\pm$NA}        & 96.0$\pm$NA           & 98.8$\pm$NA           & 95.8$\pm$NA           & 99.8$\pm$NA            & 91.3$\pm$NA           \\
                ~                    & ~                       & OE$\star$           & 0.58M          & 99.5$\pm$0.1                & 96.3$\pm$0.5          & 98.7$\pm$0.1          & 95.9$\pm$0.2          & 99.8$\pm$0.0           & 90.8$\pm$0.5          \\
                ~                    & ~                       & WOODS$\star$        & 0.58M          & 99.3$\pm$0.0                & 86.4$\pm$0.4          & 97.4$\pm$0.1          & 95.2$\pm$0.2          & 99.6$\pm$0.0           & 87.7$\pm$0.7          \\
                ~                    & ~                       & SDE-Net$\star$      & 0.28M          & 99.4$\pm$0.1                & \textbf{99.6$\pm$0.2} & \textbf{99.9$\pm$0.1} & \textbf{98.6$\pm$0.5} & \textbf{100.0$\pm$0.0} & \textbf{99.5$\pm$0.3} \\
                \cmidrule(r){3-10}
                ~                    & ~                       & BBP$\diamond$       & 0.58M$\times$2 & 99.2$\pm$0.3                & 75.0$\pm$3.4          & 94.8$\pm$1.2          & 90.4$\pm$2.2          & 99.2$\pm$0.3           & 76.0$\pm$4.2          \\
                ~                    & ~                       & p-SGLD$\diamond$    & 0.58M          & 99.3$\pm$0.2                & 85.3$\pm$2.3          & 89.1$\pm$1.6          & 90.5$\pm$1.3          & 93.6$\pm$1.0           & 82.8$\pm$2.2          \\
                ~                    & ~                       & VBOE$\diamond$      & 0.58M$\times$2 & \textbf{99.5$\pm$0.2}       & 96.0$\pm$0.2          & 99.0$\pm$0.0          & 95.4$\pm$0.2          & \textbf{99.9$\pm$0.0}  & 93.6$\pm$0.3          \\
                ~                    & ~                       & ABNN$\diamond$      & 0.58M+0.30M    & \textbf{99.5$\pm$0.0}       & \textbf{97.9$\pm$0.7} & \textbf{99.2$\pm$0.1} & \textbf{96.9$\pm$0.5} & 99.0$\pm$0.0           & \textbf{95.0$\pm$1.2} \\
                \midrule
                \multirow{9}*{MNIST} & \multirow{9}*{SVHN}     & Threshold$\star$    & 0.58M          & 99.5$\pm$0.0                & 90.1$\pm$2.3          & 96.8$\pm$0.9          & 92.9$\pm$1.1          & 90.0$\pm$3.3           & 98.7$\pm$0.3          \\
                ~                    & ~                       & DeepEnsemble$\star$ & 0.58M$\times$5 & \textbf{99.6$\pm$NA}        & 92.7$\pm$NA           & 98.0$\pm$NA           & 94.1$\pm$NA           & 94.5$\pm$NA            & 99.1$\pm$NA           \\
                ~                    & ~                       & OE$\star$           & 0.58M          & 99.5$\pm$0.1                & 92.1$\pm$1.2          & 97.5$\pm$0.4          & 93.7$\pm$0.6          & 93.1$\pm$1.3           & 99.0$\pm$0.2          \\
                ~                    & ~                       & WOODS$\star$        & 0.58M          & 99.3$\pm$0.0                & 94.2z$\pm$1.6         & 98.0$\pm$0.2          & 96.0$\pm$0.2          & 97.6$\pm$0.2           & 98.6$\pm$0.2          \\
                ~                    & ~                       & SDE-Net$\star$      & 0.28M          & 99.4$\pm$0.1                & \textbf{97.8$\pm$1.1} & \textbf{99.5$\pm$0.2} & \textbf{97.0$\pm$0.2} & \textbf{98.6$\pm$0.6}  & \textbf{99.8$\pm$0.1} \\
                \cmidrule(r){3-10}
                ~                    & ~                       & BBP$\diamond$       & 0.58M$\times$2 & 99.2$\pm$0.3                & 80.5$\pm$3.2          & 96.0$\pm$1.1          & 91.9$\pm$0.9          & 92.6$\pm$2.4           & 98.3$\pm$0.4          \\
                ~                    & ~                       & p-SGLD$\diamond$    & 0.58M          & 99.3$\pm$0.2                & 94.5$\pm$2.1          & 95.7$\pm$1.3          & 95.0$\pm$1.2          & 75.6$\pm$5.2           & 98.7$\pm$0.2          \\
                ~                    & ~                       & VBOE$\diamond$      & 0.58M$\times$2 & \textbf{99.5$\pm$0.0}       & 98.1$\pm$0.2          & \textbf{99.5$\pm$0.1} & 97.6$\pm$0.2          & 98.0$\pm$0.1           & \textbf{99.8$\pm$0.0} \\
                ~                    & ~                       & ABNN$\diamond$      & 0.58M+0.30M    & \textbf{99.5$\pm$0.0}       & \textbf{98.3$\pm$1.2} & 99.4$\pm$0.4          & \textbf{99.5$\pm$0.2} & \textbf{98.7$\pm$0.7}  & 99.7$\pm$0.2          \\
                \midrule
                \multirow{9}*{SVHN}  & \multirow{9}*{CIFAR10}  & Threshold$\star$    & 0.58M          & 95.2$\pm$0.1                & 66.1$\pm$1.9          & 94.4$\pm$0.4          & 89.8$\pm$0.5          & 96.7$\pm$0.2           & 84.6$\pm$0.8          \\
                ~                    & ~                       & DeepEnsemble$\star$ & 0.58M$\times$5 & \textbf{95.4$\pm$NA}        & 66.5$\pm$NA           & 94.6$\pm$NA           & 90.1$\pm$NA           & 97.8$\pm$NA            & 84.8$\pm$NA           \\
                ~                    & ~                       & OE$\star$           & 0.58M          & 95.2$\pm$0.0                & 67.8$\pm$2.8          & 94.9$\pm$0.5          & 90.2$\pm$0.5          & 98.0$\pm$0.2           & 85.8$\pm$1.3          \\
                ~                    & ~                       & WOODS$\star$        & 0.58M          & 94.3$\pm$0.1                & 57.9$\pm$1.5          & 94.1$\pm$0.2          & 88.6$\pm$0.2          & 97.8$\pm$0.1           & 82.7$\pm$0.4          \\
                ~                    & ~                       & SDE-Net$\star$      & 0.32M          & 94.2$\pm$0.2                & \textbf{87.5$\pm$2.8} & \textbf{97.8$\pm$0.4} & \textbf{92.7$\pm$0.7} & \textbf{99.2$\pm$0.2}  & \textbf{93.7$\pm$0.9} \\
                \cmidrule(r){3-10}
                ~                    & ~                       & BBP$\diamond$       & 0.58M$\times$2 & 93.3$\pm$0.6                & 42.2$\pm$1.2          & 90.4$\pm$0.3          & 83.9$\pm$0.4          & 96.4$\pm$0.2           & 73.9$\pm$0.5          \\
                ~                    & ~                       & p-SGLD$\diamond$    & 0.58M          & 94.1$\pm$0.5                & 63.5$\pm$0.9          & 94.3$\pm$0.4          & 87.8$\pm$1.2          & 97.9$\pm$0.2           & 83.9$\pm$0.7          \\
                ~                    & ~                       & VBOE$\diamond$      & 0.58M$\times$2 & 95.1$\pm$0.2                & 61.8$\pm$1.1          & 93.7$\pm$0.3          & 88.7$\pm$0.4          & 97.3$\pm$0.2           & 83.0$\pm$0.6          \\
                ~                    & ~                       & ABNN$\diamond$      & 0.58M+0.30M    & \textbf{95.3$\pm$0.3}       & \textbf{72.8$\pm$2.4} & \textbf{96.0$\pm$0.4} & \textbf{90.7$\pm$0.5} & \textbf{98.5$\pm$0.1}  & \textbf{88.5$\pm$1.1} \\
                \midrule
                \multirow{9}*{SVHN}  & \multirow{9}*{CIFAR100} & Threshold$\star$    & 0.58M          & 95.2$\pm$0.1                & 64.6$\pm$1.9          & 93.8$\pm$0.4          & 88.3$\pm$0.4          & 97.0$\pm$0.2           & 83.7$\pm$0.8          \\
                ~                    & ~                       & DeepEnsemble$\star$ & 0.58M$\times$5 & \textbf{95.4$\pm$NA}        & 64.4$\pm$NA           & 93.9$\pm$NA           & 89.4$\pm$NA           & 97.4$\pm$NA            & 84.8$\pm$NA           \\
                ~                    & ~                       & OE$\star$           & 0.58M          & 95.2$\pm$0.0                & 65.3$\pm$2.7          & 94.4$\pm$0.5          & 89.6$\pm$0.4          & 97.7$\pm$0.3           & 84.8$\pm$1.2          \\
                ~                    & ~                       & WOODS$\star$        & 0.58M          & 94.3$\pm$0.1                & 58.8$\pm$1.5          & 93.6$\pm$0.2          & 88.0$\pm$0.1          & 97.6$\pm$0.1           & 81.9$\pm$0.5          \\
                ~                    & ~                       & SDE-Net$\star$      & 0.32M          & 94.2$\pm$0.2                & \textbf{83.4$\pm$3.6} & \textbf{97.0$\pm$0.4} & \textbf{91.6$\pm$0.7} & \textbf{98.8$\pm$0.1}  & \textbf{92.3$\pm$1.1} \\
                \cmidrule(r){3-10}
                ~                    & ~                       & BBP$\diamond$       & 0.58M$\times$2 & 93.3$\pm$0.6                & 42.4$\pm$0.3          & 90.6$\pm$0.2          & 84.3$\pm$0.3          & 96.5$\pm$0.1           & 75.2$\pm$0.9          \\
                ~                    & ~                       & p-SGLD$\diamond$    & 0.58M          & 94.1$\pm$0.5                & 62.0$\pm$0.5          & 91.3$\pm$1.2          & 86.0$\pm$0.2          & 93.1$\pm$0.8           & 81.9$\pm$1.3          \\
                ~                    & ~                       & VBOE$\diamond$      & 0.58M$\times$2 & 95.1$\pm$0.2                & 60.4$\pm$1.1          & 93.3$\pm$0.3          & 88.2$\pm$0.4          & 97.1$\pm$0.2           & 82.3$\pm$0.6          \\
                ~                    & ~                       & ABNN$\diamond$      & 0.58M+0.30M    & \textbf{95.3$\pm$0.3}       & \textbf{70.1$\pm$0.6} & \textbf{95.2$\pm$0.3} & \textbf{91.1$\pm$0.3} & \textbf{99.0$\pm$0.2}  & \textbf{89.2$\pm$0.8} \\
                \bottomrule
            \end{tabular}}
    \end{small}
    \label{result1}
\end{table*}

\begin{table*}[htbp]
    \caption{Misclassification detection performance. We report the average performance and standard deviation for 5 random initializations.\label{result2}}
    \centering
    \begin{small}
        \begin{tabular}{lcccccccccr}
            \toprule
            Data                  & Model         & \makecell[c]{TNR                                                                                                       \\at TPR 95\%} & AUROC & \makecell[c]{Detection\\accuracy} & \makecell[c]{AUPR\\succ} & \makecell[c]{AUPR\\err}\\
            \midrule
            \multirow{10}*{MNIST} & Threshold     & 85.4$\pm$2.8          & 94.3$\pm$0.9          & 92.1$\pm$1.5          & 99.8$\pm$0.1           & 31.9$\pm$8.3          \\
            ~                     & DeepEnsemble  & 89.6$\pm$NA           & 97.5$\pm$NA           & 93.2$\pm$NA           & \textbf{100.0$\pm$NA}  & 41.4$\pm$NA           \\
            ~                     & OE            & 88.4$\pm$2.1          & 96.2$\pm$0.5          & 92.8$\pm$0.9          & \textbf{100.0$\pm$0.0} & 32.9$\pm$7.9          \\
            ~                     & WOODS         & 85.1$\pm$0.8          & 97.0$\pm$0.1          & 92.3$\pm$0.5          & 99.9$\pm$0.0           & \textbf{43.2$\pm$3.1} \\
            ~                     & SDE-Net       & 88.5$\pm$1.3          & 96.8$\pm$0.9          & 92.9$\pm$0.8          & \textbf{100.0$\pm$0.0} & 36.6$\pm$4.6          \\
            ~                     & BBP           & 88.7$\pm$0.9          & 96.5$\pm$2.1          & 93.1$\pm$0.5          & 100.0$\pm$0.0          & 35.4$\pm$3.2          \\
            ~                     & p-SGLD        & 93.2$\pm$2.5          & 96.4$\pm$1.7          & \textbf{98.4$\pm$0.2} & \textbf{100.0$\pm$0.0} & 42.0$\pm$2.4          \\
            ~                     & VBOE          & 85.1$\pm$4.3          & 94.6$\pm$0.9          & 91.5$\pm$1.9          & \textbf{100.0$\pm$0.0} & 31.4$\pm$1.8          \\
            ~                     & ABNN          & 92.4$\pm$2.1          & 98.4$\pm$0.5          & 95.3$\pm$0.7          & 99.7$\pm$0.1           & 36.2$\pm$0.5          \\
            ~                     & ABNN(CIFAR10) & \textbf{93.9$\pm$2.6} & \textbf{98.7$\pm$0.3} & 97.5$\pm$1.2          & \textbf{100.0$\pm$0.0} & 37.8$\pm$4.6          \\
            \midrule
            \multirow{10}*{SVHN}  & Threshold     & 66.4$\pm$1.7          & 90.1$\pm$0.3          & 85.9$\pm$0.4          & 99.3$\pm$0.0           & 42.8$\pm$0.6          \\
            ~                     & DeepEnsemble  & 67.2$\pm$NA           & 91.0$\pm$NA           & 86.6$\pm$NA           & 99.4$\pm$NA            & 46.5$\pm$NA           \\
            ~                     & OE            & 64.5$\pm$1.3          & 91.4$\pm$0.6          & 86.3$\pm$0.6          & 99.4$\pm$0.1           & 45.3$\pm$1.8          \\
            ~                     & WOODS         & 63.9$\pm$1.2          & 92.6$\pm$0.1          & 86.9$\pm$0.3          & 99.4$\pm$0.0           & \textbf48.6$\pm$1.9   \\
            ~                     & SDE-Net       & 65.6$\pm$1.9          & 92.3$\pm$0.5          & 86.8$\pm$0.4          & 99.4$\pm$0.0           & \textbf{53.9$\pm$2.5} \\
            ~                     & BBP           & 58.7$\pm$2.1          & 91.8$\pm$0.2          & 85.6$\pm$0.7          & 99.1$\pm$0.1           & 50.7$\pm$0.9          \\
            ~                     & p-SGLD        & 64.2$\pm$1.3          & 93.0$\pm$0.4          & 87.1$\pm$0.4          & 99.4$\pm$0.1           & 48.6$\pm$1.8          \\
            ~                     & VBOE          & 63.4$\pm$1.0          & 90.5$\pm$0.1          & 84.8$\pm$0.3          & 99.3$\pm$0.0           & 44.6$\pm$1.5          \\
            ~                     & ABNN          & 67.1$\pm$2.8          & 92.9$\pm$0.3          & 86.7$\pm$0.3          & 99.4$\pm$0.1           & 50.2$\pm$1.8          \\
            ~                     & ABNN(CIFAR10) & \textbf{67.3$\pm$1.8} & \textbf{93.2$\pm$0.2} & \textbf{87.9$\pm$0.3} & \textbf{99.5$\pm$0.1}  & 52.4$\pm$1.8          \\
            \bottomrule
        \end{tabular}
    \end{small}
\end{table*}

In this section, we evaluate the uncertainty estimation ability of ABNN through five sets of experiments. All experiments share the same setup, where the backbone is ResNet18 and the models are trained using Adam optimizer. The hyperparameters of all methods are set according to their default values and the $\alpha$ for ABNN is 0.95. The influence of varying $\alpha$ is shown in \cref{alpha}. Additionally, all models are trained by the same pseudo OOD data (constructed by adding Gaussian noise to ID data, without any additional information), which means all methods  have access to real OOD data before testing. To ensure consistency, the iteration times of SDE-Net are aligned with the number of residual blocks of ABNN and other methods. For methods that need sampling, we sample 5 times at each training step.

In \cref{4_2}, we compare ABNN with the following uncertainty estimate or OOD detection methods: (1)BBP, a classical variational inference BNN \cite{BBB}; (2)VBOE, a BNN that includes OOD training like OE \cite{kristiadi2022being}; (3) SDE-Net, a OOD detection specified method which shares similar structure and training process with ABNN \cite{kong2020sde}; (4)OE, one of the most typical OOD training method \cite{hendrycks2018deep}; (5)WOODS, a variation of OE and get state of the art OOD detection performance \cite{OE}.

Additionally, in \cref{application}, we expand our comparisons to include: (1) Threshold \cite{hendrycks2016baseline}, (2) DeepEnsemble \cite{deepensemble}, (3) OE \cite{hendrycks2018deep}, (4) WOODS \cite{OE}, (5)SDE-Net \cite{kong2020sde}, (6) BBP, (7) p-SGLD \cite{li2016preconditioned}, a Markov chain Monte Carlo BNN, and (8) VBOE. We follow Hendrycks' metrics \cite{hendrycks2016baseline}.

\subsection{Quantification Experiment}
\label{4_2}

In \cref{3_2}, we provide a brief explanation that ABNN can describe the continued growth of the uncertainty, and here we prove it with experiments. In real-world scenarios, a model can still work on semi-OOD data with a declined accuracy, while losing power on full-OOD data. Therefore, it is necessary to assess a model's ability to differentiate between ID, semi-OOD and full-OOD data.

We show uncertainty by the max value of the prediction vectors after softmax layers, which is usually seen as the probability that the input belongs to its respective class. However, for triple classification tasks like ID, semi-OOD, and full-OOD, there is no widely adopted metric. Drawing inspiration from unsupervised learning methodologies \cite{caron2018deep,dai2022cluster}, we propose a metric based on clustering. In this approach, ID, semi-OOD, and full-OOD can be viewed as three distinct clusters. A proficient uncertainty estimation model would effectively cluster predictions across the entire data space into these three groups.

We set two experiments. In the first experiment, we assign MNIST as ID data, SVHN as semi-OOD data, and CIFAR10 as full-OOD data; in the second experiment, we build ``Cat-Dog'', ``Tiger-Wolf'' datasets from CIFAR10 and CIFAR100 to serve as ID and semi-OOD data, and use other unrelated CIFAR images as full-OOD data. For a comprehensive evaluation, we utilized confusion matrices and density histograms to show the clustering results. To avoid taking the worst ID results and the best full-OOD results for unfair clustering, we remove predictions with extreme uncertainty from each dataset.

The quantitative results are listed in \cref{full_result_3_1} and \cref{full_result_3_2}. We report the average performance for 5 random initializations. As shown, all methods can treat ID data well. Since BBP only uses ID data for training, it can hardly recognize semi-OOD data and even full-OOD data. Besides BBP and WOODS, all the other methods can somehow recognize semi-OOD data and full-OOD data, but they still recognize many semi-OOD data either as ID data or full-OOD data. Among them, ABNN get the best overall results. WOODS achieve a perfect result in \cref{full_result_3_2}, but its classification accuracy in original task is only 64.7\% on training set (92.5\% for ABNN), which means WOODS may sacrifice prediction ability.

The qualitative results are show in \cref{compare_full_mnist} and \cref{compare_full_cifar}. We estimate uncertainty distributions by frequency in the top row and list all predictions by order to show the increasing trend of uncertainty in the bottom row. The Ideal visualization is draw manually, following the rule that uncertainty should increase gradually but be separable on semi-OOD and full-OOD datasets. As shown in \cref{compare_full_mnist}, besides ABNN, SDE-Net and WOODS, all methods give too low uncertainty estimation for semi-OOD data, and SDE-Net gives too high uncertainty estimate for semi-OOD data. Although WOODS can separate ID, semi-OOD and full-OOD data well, it still gives too many low estimates for full-OOD data and gives too high estimates for ID data. It is surprising that OE methods still gives low uncertainty estimation for semi-OOD data, and we find this is because we use pseudo OOD data(ID data added noises) for OOD training. If we use true OOD data or pseudo data with greater noises, their visualizations will be like SDE-Net's. As shown in \cref{compare_full_cifar}, BBP and SDE-Net mix up semi-OOD and full-OOD data. VBOE and OE can somehow distinguish semi-OOD and full-OOD data, but there are still overlapped uncertainty. WOODS can well distinguish different data, but it gives too high estimation for ID uncertainty. Although there are some mixed data, ABNN treat different uncertainties better than other models do.

\subsection{Ablation Experiment}

We also conduct ablation studies to examine the contributions of different aspects of our method and identify any unnecessary strategies. We divide our method into five strategies: (1) whether OOD data are used; (2) whether the structure is an attachment version; (3) whether the labels are random versions or constant versions; (4)if labels are constant versions, whether we maximize KL-divergence or minimize KL-divergence on $\mathcal{D}_{\rm{OOD}}$; (5) whether the OOD data is from true dataset or pseudo ones.

Following previous work \cite{hendrycks2016baseline}, we use (1) true negative rate (TNR) at 95\% true positive rate
(TPR); (2) area under the receiver operating characteristic
curve (AUROC); (3) area under the precision-recall curve
(AUPR); and (4) detection accuracy to evaluate performance. Larger values indicate better detection performance. We take MNIST as ID data and SVHN as OOD data. We report the average performance and standard deviation for 5 random initializations. The results are shown in \cref{result4}.

According to row (a) and row (b), adding OOD data during training time does help ABNN to catch more uncertainty. Row (a) and row(c) suggest that without an attachment structure, ABNN will not only harm the classification accuracy but also have worse uncertainty estimation ability. Row (d) shows that if we use random variables as training labels for OOD data, the training process will be instable, and we will get a model with poor uncertainty estimation ability. Row (a) and row (e) verify our assumption that maximizing the KL-divergence on OOD data can converge and make a more powerful model. According to row (f), we find that using true datasets as OOD data during training can bring some improvements, but it also brings some drops under several metrics. Compared with results in \cref{result1}, we find that ABNN outperforms many other models overall. In real-world scenes, one can choose how to build $\mathcal{D}_{\rm{OOD}}$ by convenience. In conclusion, the combination of strategies emerges as imperative for effective uncertainty estimation.

\subsection{Applications\label{application}}

\subsubsection{OOD Detection}

OOD detection is a significant application of uncertainty estimation \cite{cai2023out,zhang2021understanding}. In real-world scenarios, when OOD samples are presented to a model, e.g. give a dog image to an MNIST classification model, we hope the model says 'I don't know' instead of making a prediction blindly. A neural network can distinguish ID and OOD data by uncertainty estimation: ID data usually bring lower uncertainty while OOD data tend to present higher uncertainty.

We set 4 groups of experiments by choosing different datasets to be ID and OOD data: (1) MNIST vs SEMEION, (2) MNIST vs SVHN, (3) SVHN vs CIFAR10, (4) SVHN vs CIFAR100. As shown in \cref{result1}, ABNN achieves the best results compared with traditional BNNs. Although ABNN is not designed specifically for OOD detection, it is still comparable with SDE-Net and even outperforms under some metrics. Besides SDE-Net, ABNN is far better than traditional BNNs and other OOD detection models.

Classification accuracy is also an important metric. As shown in \cref{result1}, traditional BNNs may harm the predictive power, but ABNN can still make accurate classifications. We also count the amount of parameters in \cref{result1}. Variational inference BNNs double the amount of parameters. P-SGLD has to store copies of the parameters for evaluation, which is prohibitively costly. Although ABNN is a Bayesian method, it just needs a few more parameters due to the attachment structure.

\subsubsection{Misclassification Detection}

Misclassification detection is another important application of uncertainty estimation \cite{granese2021doctor,sensoy2021misclassification}. Similar to OOD detection, outputs with low uncertainty are more likely to be classified correctly and those with high uncertainty are probably misclassified. As the use of real OOD data during training does not impact fairness, we also evaluate the performance of ABNN trained with CIFAR10 as OOD data.

As shown in \cref{result2}, if not trained with real OOD data, p-SGLD achieves the best performance overall, and ABNN is comparable to it. However, if we fully explore the potential of ABNN by using real-world OOD training data, ABNN gets the best results.

\subsection{ABNN with different backbone\label{backbone}}

We opted for ResNet as our backbone for two main reasons: (1) its ease of understanding, and (2) the fact that some of our competitor models rely on ResNet as their backbone. However, our theories and optimization procedures are unrelated with the network architectures. Therefore, ABNN can be readily adapted to other BNN structures. While the backbone network primarily handles classification tasks, the uncertainty estimation predominantly relies on the configuration of the distribution modules rather than the backbone architecture. Furthermore, ABNN offers the advantage that the expectation module can be frozen, simplifying the training process by focusing solely on the small-sized distribution modules. We provide experimental evidence demonstrating ABNN's performance with different backbones in \cref{exp_backbone}

\begin{table}[htbp]
    \caption{Misclassification detection performance on different backbones. We report the average performance and standard deviation for 5 random initializations.\label{exp_backbone}}
    \centering
        \resizebox{1\linewidth}{!}{
            \begin{tabular}{cccccccccr}
                \toprule
                Backbone        & \makecell[c]{Classification                                                                            \\accuracy}        & \makecell[c]{TNR                                                             \\at TPR 95\%} & AUROC & \makecell[c]{Detection\\accuracy} & \makecell[c]{AUPR\\succ} & \makecell[c]{AUPR\\err}\\
                \midrule
                ResNet18        & 99.5$\pm $0.0               & 98.3$\pm$1.2 & 99.4$\pm$0.4 & 99.5$\pm$0.2 & 98.7$\pm$0.7 & 99.7$\pm$0.2 \\
                MobileNet V3    & 99.5$\pm $0.1               & 98.3$\pm$0.9 & 99.6$\pm$0.2 & 99.2$\pm$0.3 & 98.7$\pm$1.1 & 99.8$\pm$0.1 \\
                EfficientNet V2 & 99.6$\pm $0.2               & 98.0$\pm$1.1 & 99.5$\pm$0.3 & 99.2$\pm$0.2 & 97.9$\pm$0.3 & 99.4$\pm$0.6 \\
                SqueezeNet      & 99.5$\pm $0.1               & 98.2$\pm$1.3 & 99.6$\pm$0.3 & 99.5$\pm$0.5 & 98.2$\pm$0.4 & 99.6$\pm$0.1 \\
                \bottomrule
            \end{tabular}}
\end{table}

\subsection{Function of $\alpha$\label{alpha}}

While $\alpha$ appears to balance the uncertainty obtained from ID and OOD data, its true function is to determine the relationship between uncertainty and variance. As shown in \cref{assumption1}, the OOD training attempts to enlarge the variance to $\infty $, whereas ID training imposes constraints on the variance. As a result, there is an adversarial training that finally defines how large the variance can be. This maximum variance serves as a representation of the greatest uncertainty. Therefore, $\alpha$ implicitly defines how much uncertainty is represented by one variance level. We show by experiments that the choice of $\alpha$ would not hurt the uncertainty estimation ability of ABNN in \cref{different_alpha}.

\begin{table}[htbp]
    \caption{Misclassification detection performance with different $\alpha$. We report the average performance and standard deviation for 5 random initializations.\label{different_alpha}}
    \centering
        \resizebox{0.95\linewidth}{!}{
            \begin{tabular}{cccccccccr}
                \toprule
                $\alpha$ & \makecell[c]{Classification                                                                            \\accuracy} & \makecell[c]{TNR                                                             \\at TPR 95\%} & AUROC & \makecell[c]{Detection\\accuracy} & \makecell[c]{AUPR\\succ} & \makecell[c]{AUPR\\err}\\
                \midrule
                0.5      & 99.5$\pm $0.0               & 98.8$\pm$1.8 & 99.2$\pm$0.4 & 98.2$\pm$0.9 & 97.7$\pm$1.0 & 99.4$\pm$0.2 \\
                0.7      & 99.5$\pm $0.0               & 98.0$\pm$0.9 & 98.8$\pm$0.3 & 99.2$\pm$0.5 & 97.6$\pm$0.5 & 99.4$\pm$0.3 \\
                0.9      & 99.5$\pm $0.0               & 98.4$\pm$1.1 & 99.4$\pm$0.3 & 99.3$\pm$0.8 & 98.7$\pm$0.7 & 99.7$\pm$0.2 \\
                0.95     & 99.5$\pm $0.0               & 98.3$\pm$1.2 & 99.4$\pm$0.4 & 99.5$\pm$0.2 & 98.7$\pm$0.7 & 99.7$\pm$0.2 \\
                1        & 99.5$\pm $0.0               & 98.4$\pm$0.8 & 99.5$\pm$0.3 & 99.4$\pm$0.7 & 98.8$\pm$0.5 & 99.8$\pm$0.2 \\
                \bottomrule
            \end{tabular}}
\end{table}

\section{Discussions}
Our segmentations and definitions operate under the assumption that the classification task aligns with the estimation of the target distribution for each class. However, we have not provided a method to quantify the validity of uncertainty estimation during training for ABNN. This issue could potentially be addressed by analyzing the second loss term, $\mathbb{D}_{KL}[q(\omega|\mu,\sigma^2)||p(\omega|\mathcal{D}_{\rm{ID}})]$. Without the third loss term, the ID KL-divergence should be optimized to its minimum, which implies that the distribution modules are the best estimations of the ID posteriors. However, the introduction of the third loss term may increase certain variances, thereby affecting the ID KL-divergence. Consequently, an increase in the ID KL-divergence signifies the extent to which OOD uncertainty is captured and whether ABNN can maintain its performance on ID data.

\section{Conclusion}

We propose a variational inference Bayesian Neural Network with an attachment structure to catch more uncertainty from OOD data. We provide mathematical descriptions for OOD data and design the attachment structure for proper integration of uncertainty. The convergence of ABNN is theoretically analyzed. The experiments show the superiority of ABNN over traditional BNNs. In the future, we will pay attention to developing a more effective method to catch the uncertainty from OOD data. Overall, our proposed method can be considered as a framework, where the attachment structure can be extended to other general BNNs.

\bibliography{ABNN}

\begin{thebibliography}{10}

\bibitem{li2022research}
Yinglong Li.
\newblock Research and application of deep learning in image recognition.
\newblock In {\em 2022 IEEE 2nd International Conference on Power, Electronics and Computer Applications (ICPECA)}, pages 994--999. IEEE, 2022.

\bibitem{zhang2022meta}
Zizhen Zhang, Zhiyuan Wu, Hang Zhang, and Jiahai Wang.
\newblock Meta-learning-based deep reinforcement learning for multiobjective optimization problems.
\newblock {\em IEEE Transactions on Neural Networks and Learning Systems}, 2022.

\bibitem{lauriola2022introduction}
Ivano Lauriola, Alberto Lavelli, and Fabio Aiolli.
\newblock An introduction to deep learning in natural language processing: Models, techniques, and tools.
\newblock {\em Neurocomputing}, 470:443--456, 2022.

\bibitem{gawlikowski2023survey}
Jakob Gawlikowski, Cedrique Rovile~Njieutcheu Tassi, Mohsin Ali, Jongseok Lee, Matthias Humt, Jianxiang Feng, Anna Kruspe, Rudolph Triebel, Peter Jung, Ribana Roscher, et~al.
\newblock A survey of uncertainty in deep neural networks.
\newblock {\em Artificial Intelligence Review}, 56(Suppl 1):1513--1589, 2023.

\bibitem{wang2022uncertainty}
Xuesong Wang, Lina Yao, Xianzhi Wang, Hye-Young Paik, and Sen Wang.
\newblock Uncertainty estimation with neural processes for meta-continual learning.
\newblock {\em IEEE Transactions on Neural Networks and Learning Systems}, 2022.

\bibitem{park2022uncertainty}
Kidon Park, Hong-Gyu Jung, Tae-San Eom, and Seong-Whan Lee.
\newblock Uncertainty-aware portfolio management with risk-sensitive multiagent network.
\newblock {\em IEEE Transactions on Neural Networks and Learning Systems}, 2022.

\bibitem{oszkinat2022uncertainty}
Clemens Oszkinat, Susan~E Luczak, and IG~Rosen.
\newblock Uncertainty quantification in estimating blood alcohol concentration from transdermal alcohol level with physics-informed neural networks.
\newblock {\em IEEE Transactions on Neural Networks and Learning Systems}, 2022.

\bibitem{hein2019relu}
Matthias Hein, Maksym Andriushchenko, and Julian Bitterwolf.
\newblock Why relu networks yield high-confidence predictions far away from the training data and how to mitigate the problem.
\newblock In {\em Proceedings of the IEEE/CVF Conference on Computer Vision and Pattern Recognition}, pages 41--50, 2019.

\bibitem{wei2022mitigating}
Hongxin Wei, Renchunzi Xie, Hao Cheng, Lei Feng, Bo~An, and Yixuan Li.
\newblock Mitigating neural network overconfidence with logit normalization.
\newblock In {\em International Conference on Machine Learning}, pages 23631--23644. PMLR, 2022.

\bibitem{nguyen2015deep}
Anh Nguyen, Jason Yosinski, and Jeff Clune.
\newblock Deep neural networks are easily fooled: High confidence predictions for unrecognizable images.
\newblock In {\em Proceedings of the IEEE conference on computer vision and pattern recognition}, pages 427--436, 2015.

\bibitem{BBB}
Charles Blundell, Julien Cornebise, Koray Kavukcuoglu, and Daan Wierstra.
\newblock Weight uncertainty in neural network.
\newblock In {\em International Conference on Machine Learning}, pages 1613--1622. PMLR, 2015.

\bibitem{huang2023bayesian}
Fuyi Huang, Sheng Zhang, and Wei~Xing Zheng.
\newblock Bayesian-learning-based diffusion least mean square algorithms over networks.
\newblock {\em IEEE Transactions on Neural Networks and Learning Systems}, 2023.

\bibitem{li2016preconditioned}
Chunyuan Li, Changyou Chen, David Carlson, and Lawrence Carin.
\newblock Preconditioned stochastic gradient langevin dynamics for deep neural networks.
\newblock In {\em Thirtieth AAAI Conference on Artificial Intelligence}, 2016.

\bibitem{jia2023energy}
Xiaotao Jia, Huiyi Gu, Yuhao Liu, Jianlei Yang, Xueyan Wang, Weitao Pan, Youguang Zhang, Sorin Cotofana, and Weisheng Zhao.
\newblock An energy-efficient bayesian neural network implementation using stochastic computing method.
\newblock {\em IEEE Transactions on Neural Networks and Learning Systems}, 2023.

\bibitem{NEURIPS2021_a7c95857}
Erik Daxberger, Agustinus Kristiadi, Alexander Immer, Runa Eschenhagen, Matthias Bauer, and Philipp Hennig.
\newblock Laplace redux - effortless bayesian deep learning.
\newblock In M.~Ranzato, A.~Beygelzimer, Y.~Dauphin, P.S. Liang, and J.~Wortman Vaughan, editors, {\em Advances in Neural Information Processing Systems}, volume~34, pages 20089--20103. Curran Associates, Inc., 2021.

\bibitem{gaedke2023parallelized}
Lisa Gaedke-Merzh{\"a}user, Janet van Niekerk, Olaf Schenk, and H{\aa}vard Rue.
\newblock Parallelized integrated nested laplace approximations for fast bayesian inference.
\newblock {\em Statistics and Computing}, 33(1):25, 2023.

\bibitem{nazaret2022variational}
Achille Nazaret and David Blei.
\newblock Variational inference for infinitely deep neural networks.
\newblock In {\em International Conference on Machine Learning}, pages 16447--16461. PMLR, 2022.

\bibitem{graves2011practical}
Alex Graves.
\newblock Practical variational inference for neural networks.
\newblock {\em Advances in neural information processing systems}, 24, 2011.

\bibitem{shridhar2019comprehensive}
Kumar Shridhar, Felix Laumann, and Marcus Liwicki.
\newblock A comprehensive guide to bayesian convolutional neural network with variational inference.
\newblock {\em arXiv preprint arXiv:1901.02731}, 2019.

\bibitem{kristiadi2020being}
Agustinus Kristiadi, Matthias Hein, and Philipp Hennig.
\newblock Being bayesian, even just a bit, fixes overconfidence in relu networks.
\newblock In {\em International Conference on Machine Learning}, pages 5436--5446. PMLR, 2020.

\bibitem{de2023value}
Ashwin De~Silva, Rahul Ramesh, Carey Priebe, Pratik Chaudhari, and Joshua~T Vogelstein.
\newblock The value of out-of-distribution data.
\newblock In {\em International Conference on Machine Learning}, pages 7366--7389. PMLR, 2023.

\bibitem{kristiadi2022being}
Agustinus Kristiadi, Matthias Hein, and Philipp Hennig.
\newblock Being a bit frequentist improves bayesian neural networks.
\newblock In {\em International Conference on Artificial Intelligence and Statistics}, pages 529--545. PMLR, 2022.

\bibitem{OE}
Julian Katz-Samuels, Julia~B Nakhleh, Robert Nowak, and Yixuan Li.
\newblock Training {OOD} detectors in their natural habitats.
\newblock In Kamalika Chaudhuri, Stefanie Jegelka, Le~Song, Csaba Szepesvari, Gang Niu, and Sivan Sabato, editors, {\em Proceedings of the 39th International Conference on Machine Learning}, volume 162 of {\em Proceedings of Machine Learning Research}, pages 10848--10865. PMLR, 17--23 Jul 2022.

\bibitem{hendrycks2018deep}
Dan Hendrycks, Mantas Mazeika, and Thomas Dietterich.
\newblock Deep anomaly detection with outlier exposure.
\newblock {\em arXiv preprint arXiv:1812.04606}, 2018.

\bibitem{kong2020sde}
Lingkai Kong, Jimeng Sun, and Chao Zhang.
\newblock Sde-net: Equipping deep neural networks with uncertainty estimates.
\newblock {\em arXiv preprint arXiv:2008.10546}, 2020.

\bibitem{zhang2023mixture}
Jingyang Zhang, Nathan Inkawhich, Randolph Linderman, Yiran Chen, and Hai Li.
\newblock Mixture outlier exposure: Towards out-of-distribution detection in fine-grained environments.
\newblock In {\em Proceedings of the IEEE/CVF Winter Conference on Applications of Computer Vision}, pages 5531--5540, 2023.

\bibitem{zhu2023openmix}
Fei Zhu, Zhen Cheng, Xu-Yao Zhang, and Cheng-Lin Liu.
\newblock Openmix: Exploring outlier samples for misclassification detection.
\newblock In {\em Proceedings of the IEEE/CVF Conference on Computer Vision and Pattern Recognition}, pages 12074--12083, 2023.

\bibitem{sharma2023bayesian}
Mrinank Sharma, Sebastian Farquhar, Eric Nalisnick, and Tom Rainforth.
\newblock Do bayesian neural networks need to be fully stochastic?
\newblock In {\em International Conference on Artificial Intelligence and Statistics}, pages 7694--7722. PMLR, 2023.

\bibitem{jospin2022hands}
Laurent~Valentin Jospin, Hamid Laga, Farid Boussaid, Wray Buntine, and Mohammed Bennamoun.
\newblock Hands-on bayesian neural networks—a tutorial for deep learning users.
\newblock {\em IEEE Computational Intelligence Magazine}, 17(2):29--48, 2022.

\bibitem{liu2023bayesian}
Liangxi Liu, Xi~Jiang, Feng Zheng, Hong Chen, Guo-Jun Qi, Heng Huang, and Ling Shao.
\newblock A bayesian federated learning framework with online laplace approximation.
\newblock {\em IEEE Transactions on Pattern Analysis and Machine Intelligence}, 2023.

\bibitem{krishnan2020specifying}
Ranganath Krishnan, Mahesh Subedar, and Omesh Tickoo.
\newblock Specifying weight priors in bayesian deep neural networks with empirical bayes.
\newblock In {\em Proceedings of the AAAI Conference on Artificial Intelligence}, volume~34, pages 4477--4484, 2020.

\bibitem{hendrycks2016baseline}
Dan Hendrycks and Kevin Gimpel.
\newblock A baseline for detecting misclassified and out-of-distribution examples in neural networks.
\newblock {\em arXiv preprint arXiv:1610.02136}, 2016.

\bibitem{odin}
Shiyu Liang, Yixuan Li, and Rayadurgam Srikant.
\newblock Enhancing the reliability of out-of-distribution image detection in neural networks.
\newblock {\em arXiv preprint arXiv:1706.02690}, 2017.

\bibitem{liu2020energy}
Weitang Liu, Xiaoyun Wang, John Owens, and Yixuan Li.
\newblock Energy-based out-of-distribution detection.
\newblock {\em Advances in neural information processing systems}, 33:21464--21475, 2020.

\bibitem{lin2021mood}
Ziqian Lin, Sreya~Dutta Roy, and Yixuan Li.
\newblock Mood: Multi-level out-of-distribution detection.
\newblock In {\em Proceedings of the IEEE/CVF conference on Computer Vision and Pattern Recognition}, pages 15313--15323, 2021.

\bibitem{djurisic2022extremely}
Andrija Djurisic, Nebojsa Bozanic, Arjun Ashok, and Rosanne Liu.
\newblock Extremely simple activation shaping for out-of-distribution detection.
\newblock {\em arXiv preprint arXiv:2209.09858}, 2022.

\bibitem{vyas2018out}
Apoorv Vyas, Nataraj Jammalamadaka, Xia Zhu, Dipankar Das, Bharat Kaul, and Theodore~L Willke.
\newblock Out-of-distribution detection using an ensemble of self supervised leave-out classifiers.
\newblock In {\em Proceedings of the European Conference on Computer Vision (ECCV)}, pages 550--564, 2018.

\bibitem{bitterwolf2020certifiably}
Julian Bitterwolf, Alexander Meinke, and Matthias Hein.
\newblock Certifiably adversarially robust detection of out-of-distribution data.
\newblock {\em Advances in Neural Information Processing Systems}, 33:16085--16095, 2020.

\bibitem{choi2019novelty}
Sungik Choi and Sae-Young Chung.
\newblock Novelty detection via blurring.
\newblock {\em arXiv preprint arXiv:1911.11943}, 2019.

\bibitem{chen2021atom}
Jiefeng Chen, Yixuan Li, Xi~Wu, Yingyu Liang, and Somesh Jha.
\newblock Atom: Robustifying out-of-distribution detection using outlier mining.
\newblock In {\em Machine Learning and Knowledge Discovery in Databases. Research Track: European Conference, ECML PKDD 2021, Bilbao, Spain, September 13--17, 2021, Proceedings, Part III 21}, pages 430--445. Springer, 2021.

\bibitem{thulasidasan2019mixup}
Sunil Thulasidasan, Gopinath Chennupati, Jeff~A Bilmes, Tanmoy Bhattacharya, and Sarah Michalak.
\newblock On mixup training: Improved calibration and predictive uncertainty for deep neural networks.
\newblock {\em Advances in Neural Information Processing Systems}, 32, 2019.

\bibitem{hendrycks2022pixmix}
Dan Hendrycks, Andy Zou, Mantas Mazeika, Leonard Tang, Bo~Li, Dawn Song, and Jacob Steinhardt.
\newblock Pixmix: Dreamlike pictures comprehensively improve safety measures.
\newblock In {\em Proceedings of the IEEE/CVF Conference on Computer Vision and Pattern Recognition}, pages 16783--16792, 2022.

\bibitem{lecun1998gradient}
Yann LeCun, L{\'e}on Bottou, Yoshua Bengio, and Patrick Haffner.
\newblock Gradient-based learning applied to document recognition.
\newblock {\em Proceedings of the IEEE}, 86(11):2278--2324, 1998.

\bibitem{netzer2011reading}
Yuval Netzer, Tao Wang, Adam Coates, Alessandro Bissacco, Bo~Wu, and Andrew~Y Ng.
\newblock Reading digits in natural images with unsupervised feature learning.
\newblock 2011.

\bibitem{wan2018rethinking}
Weitao Wan, Yuanyi Zhong, Tianpeng Li, and Jiansheng Chen.
\newblock Rethinking feature distribution for loss functions in image classification.
\newblock In {\em Proceedings of the IEEE conference on computer vision and pattern recognition}, pages 9117--9126, 2018.

\bibitem{Xie_2022_CVPR}
Jiangtao Xie, Fei Long, Jiaming Lv, Qilong Wang, and Peihua Li.
\newblock Joint distribution matters: Deep brownian distance covariance for few-shot classification.
\newblock In {\em Proceedings of the IEEE/CVF Conference on Computer Vision and Pattern Recognition (CVPR)}, pages 7972--7981, June 2022.

\bibitem{wang2019classification}
Jing Wang and Xin Geng.
\newblock Classification with label distribution learning.
\newblock In {\em IJCAI}, pages 3712--3718, 2019.

\bibitem{le2023uncertainty}
Nhat Le, Khanh Nguyen, Quang Tran, Erman Tjiputra, Bac Le, and Anh Nguyen.
\newblock Uncertainty-aware label distribution learning for facial expression recognition.
\newblock In {\em Proceedings of the IEEE/CVF Winter Conference on Applications of Computer Vision}, pages 6088--6097, 2023.

\bibitem{krizhevsky2009learning}
Alex Krizhevsky, Geoffrey Hinton, et~al.
\newblock Learning multiple layers of features from tiny images.
\newblock 2009.

\bibitem{hsu2020generalized}
Yen-Chang Hsu, Yilin Shen, Hongxia Jin, and Zsolt Kira.
\newblock Generalized odin: Detecting out-of-distribution image without learning from out-of-distribution data.
\newblock In {\em Proceedings of the IEEE/CVF Conference on Computer Vision and Pattern Recognition}, pages 10951--10960, 2020.

\bibitem{maennel2020neural}
Hartmut Maennel, Ibrahim Alabdulmohsin, Ilya Tolstikhin, Robert~JN Baldock, Olivier Bousquet, Sylvain Gelly, and Daniel Keysers.
\newblock What do neural networks learn when trained with random labels?
\newblock {\em arXiv preprint arXiv:2006.10455}, 2020.

\bibitem{li2023self}
Ji~Li, Weixi Wang, Yuesong Nan, and Hui Ji.
\newblock Self-supervised blind motion deblurring with deep expectation maximization.
\newblock In {\em Proceedings of the IEEE/CVF Conference on Computer Vision and Pattern Recognition}, pages 13986--13996, 2023.

\bibitem{zhou2020deep}
Fei Zhou, Wenfeng Chen, and Yani Xiao.
\newblock Deep learning research with an expectation-maximization model for person re-identification.
\newblock {\em IEEE Access}, 8:157762--157772, 2020.

\bibitem{deepensemble}
Balaji Lakshminarayanan, Alexander Pritzel, and Charles Blundell.
\newblock Simple and scalable predictive uncertainty estimation using deep ensembles.
\newblock {\em arXiv preprint arXiv:1612.01474}, 2016.

\bibitem{caron2018deep}
Mathilde Caron, Piotr Bojanowski, Armand Joulin, and Matthijs Douze.
\newblock Deep clustering for unsupervised learning of visual features.
\newblock In {\em Proceedings of the European Conference on Computer Vision (ECCV)}, pages 132--149, 2018.

\bibitem{dai2022cluster}
Zuozhuo Dai, Guangyuan Wang, Weihao Yuan, Siyu Zhu, and Ping Tan.
\newblock Cluster contrast for unsupervised person re-identification.
\newblock In {\em Proceedings of the Asian Conference on Computer Vision}, pages 1142--1160, 2022.

\bibitem{cai2023out}
Mu~Cai and Yixuan Li.
\newblock Out-of-distribution detection via frequency-regularized generative models.
\newblock In {\em Proceedings of the IEEE/CVF Winter Conference on Applications of Computer Vision}, pages 5521--5530, 2023.

\bibitem{zhang2021understanding}
Lily Zhang, Mark Goldstein, and Rajesh Ranganath.
\newblock Understanding failures in out-of-distribution detection with deep generative models.
\newblock In {\em International Conference on Machine Learning}, pages 12427--12436. PMLR, 2021.

\bibitem{granese2021doctor}
Federica Granese, Marco Romanelli, Daniele Gorla, Catuscia Palamidessi, and Pablo Piantanida.
\newblock Doctor: A simple method for detecting misclassification errors.
\newblock {\em arXiv preprint arXiv:2106.02395}, 2021.

\bibitem{sensoy2021misclassification}
Murat Sensoy, Maryam Saleki, Simon Julier, Reyhan Aydogan, and John Reid.
\newblock Misclassification risk and uncertainty quantification in deep classifiers.
\newblock In {\em Proceedings of the IEEE/CVF Winter Conference on Applications of Computer Vision}, pages 2484--2492, 2021.

\end{thebibliography}
\bibliographystyle{unsrt}

\end{document}